\documentclass[letterpaper]{article}
\usepackage{aaai25} %
\usepackage{times} %
\usepackage{helvet} %
\usepackage{courier} %
\usepackage[hyphens]{url} %
\usepackage{graphicx} %
\urlstyle{rm} %
\usepackage{graphicx}  %
\usepackage{natbib}  %
\usepackage{caption}  %
\frenchspacing %
\setlength{\pdfpagewidth}{8.5in} %
\setlength{\pdfpageheight}{11in} %

\usepackage{algorithm}
\usepackage{algorithmic}
\usepackage{newfloat}
\usepackage{listings}
\DeclareCaptionStyle{ruled}{labelfont=normalfont,labelsep=colon,strut=off} %
\lstset{%
	basicstyle={\footnotesize\ttfamily},%
	numbers=left,numberstyle=\footnotesize,xleftmargin=2em,%
	aboveskip=0pt,belowskip=0pt,%
	showstringspaces=false,tabsize=2,breaklines=true}
\floatstyle{ruled}
\newfloat{listing}{tb}{lst}{}
\floatname{listing}{Listing}

\usepackage{xspace}
\usepackage{ifthen}

\newcommand{\newabbrev}[2]{\newcommand{#1}{#2\xspace}}
\newboolean{draft}
\newcommand{\when}[2]{\ifthenelse{\boolean{#1}}{#2}{}}
\newcommand{\Unless}[2]{\ifthenelse{\boolean{#1}}{}{#2}}
\newcommand{\IfThenElse}[3]{\ifthenelse{\boolean{#1}}{#2}{#3}}
\renewcommand{\ifdraft}[1]{\ifthenelse{\boolean{draft}}{#1}{}}

\usepackage{color}

\newcommand{\hide}[1]{}

\newcommand{\newtext}[1]{\ifthenelse{\boolean{draft}}{\textcolor{blue}{#1}}{#1}}

\newabbrev{\papertitle}{HTN Plan Repair Algorithms Compared: Strengths and Weaknesses of Different Methods}
\newabbrev{\ipyhopper}{\textsc{IPyHOPPER}}
\newabbrev{\shopfixer}{\textsc{SHOPFixer}}
\newabbrev{\rewrite}{\textsc{Rewrite}}
\newabbrev{\shop}{\textsc{Shop3}}
\newabbrev{\ie}{\textit{i.e.}}
\newabbrev{\eg}{\textit{e.g.}}
\newabbrev{\etc}{\textit{etc.}}
\newabbrev{\etal}{\textit{et al.}}
\newabbrev{\hetal}{H\"{o}ller, \etal}
\newabbrev{\rsthree}{Rewrite-\shop{}}

\newabbrev{\ipshort}{IPH}
\newabbrev{\sfshort}{SF}
\newabbrev{\rwshort}{RW}

\newcommand{\replace}{\operatorname{replace}}
\newcommand{\plan}{\operatorname{plan}}
\newcommand{\pre}{\operatorname{pre}}
\newcommand{\prestar}{\operatorname{pre}^{*}}
\newcommand{\M}{\mathcal{M}}

\newboolean{clearedp}
\newboolean{anonymous}

\newcommand{\ndraft}[1]{}
\newcommand{\indraft}[1]{\ifthenelse{\boolean{draft}}{#1}{}}
\newcommand{\notdraft}[1]{\ifthenelse{\boolean{draft}}{}{#1}}

\newcommand{\disclaimer}{Any opinions, findings and conclusions, or 
recommendations expressed
in this material are those of the authors and do not necessarily
reflect the views of the \contractingagency.}
\newcommand{\contractingagency}{AFRL}

\newcommand{\ack}{\ifthenelse{\boolean{clearedp} \AND
    \not{\boolean{anonymous}}}{This project is sponsored by the Air
    Force Research Laboratory (AFRL) under contract FA8750- 23-C-0515
    for the HI-DE-HO STTR Phase 2 program.  Distribution Statement A.
    Approved for public release: distribution is unlimited.
    \disclaimer{}
    Thanks to the anonymous
    reviewers for their helpful feedback. MR thanks ONR and NRL for funding portions of his research.}{Submitted for public approval.}}

\usepackage{amssymb}

\usepackage{amsmath}
\usepackage{adjustbox}

\newtheorem{theorem}{Theorem}
\newtheorem{corollary}{Corollary}

\newenvironment{proof}{%
\paragraph{Proof}}{%
\hspace*{\fill} 	\noindent $\Box$\par\addvspace{\topsep}
}

\usepackage[algo2e,ruled,linesnumbered]{algorithm2e}

\newboolean{forAFclearance}
\setboolean{forAFclearance}{false}
\setboolean{clearedp}{true}
\setboolean{anonymous}{false}
\newboolean{supplements}
\setboolean{supplements}{true}
\newboolean{forArxiv}
\setboolean{forArxiv}{true}

\setboolean{draft}{false}

\synctex=1

\usepackage{enumitem}

  \pdfinfo{
/Title (HTN Plan Repair Algorithms Compared: Strengths and Weaknesses of Different Methods) 
/Author (Paul Zaidins, Robert P. Goldman, Ugur Kuter, Dana Nau, Mark Roberts)}

\pdfinfo{
/TemplateVersion (2025.1)
}

\setcounter{secnumdepth}{1} %

\title{\papertitle\ifthenelse{\boolean{forArxiv}}{
  }{\footnote{Figures with titles in the form of "Figure S\#" can be found in the supplemental materials. Supplemental content can be found at: <ARXIV LINK GO HERE>.}}}
\author{
    Paul Zaidins\textsuperscript{\rm 1},
    Robert P. Goldman\textsuperscript{\rm 2},
    Ugur Kuter\textsuperscript{\rm 2}, 
    Dana Nau\textsuperscript{\rm 1}, 
    Mark Roberts\textsuperscript{\rm 3}
}
\affiliations{
    \textsuperscript{\rm 1}Department of Computer Science and Institute for Systems Research, University of Maryland \\
     \textsuperscript{\rm 2}SIFT, LLC \\
     \textsuperscript{\rm 3}Navy Center for Applied Research in AI, Naval Research Laboratory, Washington, DC, USA \\
     pzaidins@umd.edu,
     rpgoldman@sift.net,
     ukuter@sift.net,
     nau@umd.edu,
     mark.c.roberts20.civ@us.navy.mil
     
}

\begin{document}
\pagestyle{plain}
\thispagestyle{plain}
\maketitle

\begin{abstract}
This paper provides theoretical and empirical comparisons of three recent hierarchical plan repair algorithms: \shopfixer, \ipyhopper, and \rewrite. Our theoretical results show that the three algorithms correspond to three different definitions of the plan repair problem, leading to differences in the algorithms’ search spaces, the repair problems they can solve, and the kinds of repairs they can make. Understanding these distinctions is important when choosing a repair method for any given application.

Building on the theoretical results, we evaluate the algorithms empirically in a series of benchmark planning problems. Our empirical results provide more detailed insight into the runtime repair performance of these systems and the coverage of the repair problems solved, based on algorithmic properties such as replanning, chronological backtracking, and backjumping over plan trees.
\end{abstract}

\section{Introduction}\label{sec:introduction}

Dynamic control and management of plans during execution is challenging in many multi- and hierarchically-organized agent systems, e.g., military operations, warehouse automation, and other practical applications. During execution, agents' plans may fail due to precondition failures induced by exogenous events or by actions of other agents. \citet{fox06plan} showed that in the face of disruptions, plan repair could provide new plans faster and with fewer revisions than replanning from scratch. They used the term ``stability'' to refer to the new plan's similarity to the old one, by analogy to the term from control theory.

Agent systems using Hierarchical Task Network (HTN) planning generate by decomposing high-level (more abstract) tasks into lower-level (less abstract) subtasks before or during execution. Generalizing stable plan repair from classical planning to HTN planning requires localization of disruptions, errors and failures in the hierarchies, and it uses problem refinement methods that take advantage of such localizations to provide better stability \cite{GoldmanETAL:PlanRepair:2020}. 
Early work on hierarchical plan repair introduced validation graphs to hierarchical partial-order planning, using the validation graphs to identify disruptions and make patches in the partial-order plans \cite{kambhampati92priar}.
Several recent algorithms, namely \shopfixer \cite{GoldmanETAL:PlanRepair:2020}, \ipyhopper \cite{Zaidins:HPlan2023}, and an unnamed algorithm that we will call \rewrite \cite{HollerHTNPlanRepair2020}, have developed state-of-the art plan-repair strategies. 
They build on several previous methods \cite{ayan07hotride,kuter12shoplifter,bansod2022HTNReplanning,BercherPlanRepairExecute:2014}.

This paper analyzes the formal properties and performance of \ipyhopper, \shopfixer{}, and \rewrite{}.
We compare them formally, and empirically evaluate their performance in a series of benchmark planning problems. 
The following paragraphs summarize our contributions:
\medskip

First, to enable comparison with \rewrite's definition of plan repair \cite{HollerHTNPlanRepair2020}, we formally define the notions of plan repair used by \ipyhopper and \shopfixer. This enables us to prove the following results:
\begin{enumerate}
\item
The definitions correctly characterize the search spaces of
\ipyhopper and \shopfixer.
\item
The sets of solutions for the three algorithms are distinct but not disjoint.
\item
We prove other relationships among the sets of solutions.
\end{enumerate}

Second, we empirically evaluate algorithm performance on three domains drawn from the International Planning Competition (IPC): 
  Openstacks, Rovers, and Satellites. Here are our primary empirical findings:
\begin{enumerate}[start=4]
\item
\rewrite is often the slowest algorithm, because it extensively rederives plans.
\item The causal links and backjumping of \shopfixer are useful for larger problems in improving performance by
  narrowing the size of the search space.
\item The chronological backtracking and simple forward simulation of \ipyhopper can outperform more sophesticated methods
  for smaller problems due to its minimal overhead.
\end{enumerate}

\section{Preliminaries}
\label{sec:preliminaries}
We begin with the usual classical definitions \cite{GhallabNauTraverso:Book1}, e.g., a state $s$ is a set of ground atoms, an action $a$'s preconditions are $\pre(a)$ \newtext{(in the general case, a First Order Logic formula)}, and if $s$ satisfies $\pre(a)$ then $a$ is applicable in $s$ and $\gamma(s,a)$ is the resulting state. 
A \emph{plan} is a possibly-empty action sequence $\pi =$ $\langle a_1, \ldots, a_n\rangle$.
Applicability of $\pi$ in a state $s_0$,
and the state $\gamma(s_0,\pi)$ produced by applying it, are defined in the usual way. 

To denote the parts of $\pi$ before and after $a_i$, we will write
$\pi[\prec a_i] = \langle a_1,\ldots, a_{i-1}\rangle$
and
$\pi[\succ a_i] = \langle a_{i+1},\ldots,a_n\rangle.$

An \emph{HTN planning domain} is a pair $\Sigma = (\Sigma_c,\M)$, where $\Sigma_c$ is a classical planning domain and $\M$ is a set of methods (see~\cite{GhallabNauTraverso:Book1} for details).
We will only consider \emph{total-order} HTN planning, in which every method's subtasks are totally ordered. 
 We let $M$ be the set of all ground instances of the methods in $\M$.

A \emph{decomposition tree} (d-tree), $T$, represents a recursive decomposition of a task or a task sequence. For a single task $t$, $T$
 has a root node labeled with $t$. For a task sequence $\tau = \langle t_1,\ldots, t_n\rangle$, $T$ has an empty root node whose children are d-trees for the tasks $t_1, \ldots, t_n$.
The primitive tasks (i.e., actions) in $T$ have no children.
Each nonprimitive task $t'$ in $T$ has a child node labeled with a ground method that is relevant for $t'$. 
Each ground method $m$ in $T$ has a sequence of child nodes
labeled with $m$'s subtasks, or if $m$ has no subtasks then it has
one child node labeled by a dummy action with no preconditions and no effects (this simplifies some of the recursive definitions in the next section).

\newtext{We represent $T$'s subtrees with the following notation.}
$T[t]$ is the subtree having $t$ as its root;
$T[\prec t]$ is the subtree containing $t$'s postorder predecessors (i.e., nodes before $t$ in a postorder traversal);
$T[\preceq t]$ is the subtree containing both $T[\prec t]$ and $t$; and
$T[\succ t]$ and $T[\succeq t]$ work similarly.
\smallskip

An \emph{HTN planning problem} is a triple $P = (\Sigma, s_0, \tau_0)$, where $\Sigma$ is the HTN planning domain,
$s_0$ is the initial state, and $\tau_0 = \langle t_1,\ldots,t_n\rangle$ is a (possibly empty) sequence of tasks to accomplish.
A \emph{solution tree} for $P$ is a decomposition tree for $\tau_0$ that is applicable in $s_0$.

Suppose we have executed part of a solution tree $T$ for $P$.
Let $\pi = \plan(T)$ be the sequence of actions at the leaves of $T$, $a_c$ be the last executed action, \newtext{and $s_c$ be the current state} (see Figure \ref{fig:relationships})\newtext{, which will equal $\gamma(s_0, \pi[\preceq a_c])$ unless a disruption occurs.}
 Then the executed and unexecuted parts of $T$
are $T_x= T[\preceq a_c]$ and $T_u= T[\succ a_c]$, and the
executed and unexecuted parts of $\pi$ are
$\pi_x= \pi[\preceq a_c] = \plan(T_x)$
and $\pi_u= \pi[\succ a_c] = \plan(T_u)$.
A task $t$ in $T$ is \emph{fully executed} if $T[t]$ is a subtree of \linebreak
$T[\preceq a_c]$, \emph{unexecuted} if $T[t]$ is a subtree of $T[\succ a_c]$, and \emph{partially executed} otherwise.

\begin{figure}
\centering
\includegraphics[width=.93\columnwidth]{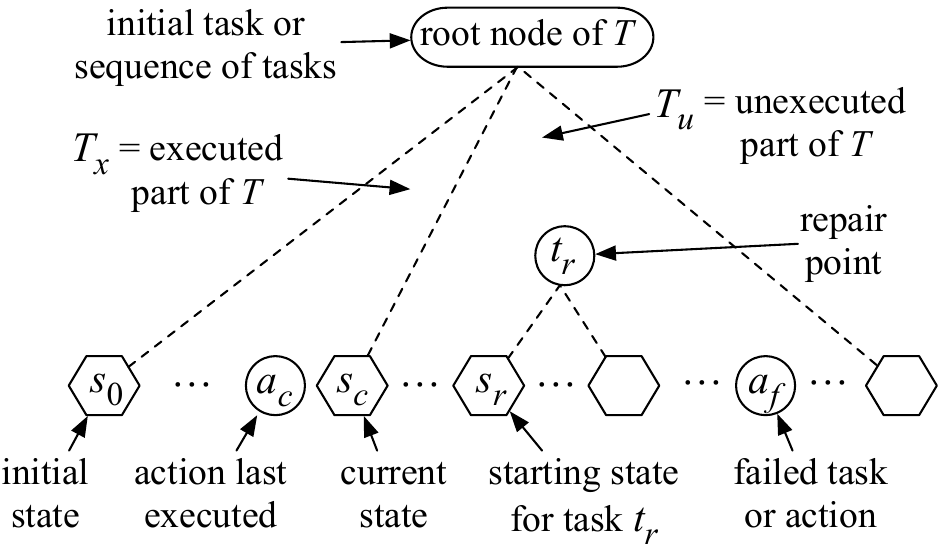}\qquad
\caption{States and tasks relevant for repair of $T$.}
\label{fig:relationships}

\end{figure}

\section{HTN Plan Repair Algorithms}
\label{sec:repair-strategies}

Differences among the three algorithms color the experimental results and are critical to
their interpretation.
Briefly: %
\begin{itemize}[noitemsep,topsep=0pt,parsep=0pt,partopsep=0pt]
\item Because \rewrite (or \rwshort for short) always respects $\pi_x$, it considers unsolvable some of the problems that \ipyhopper (\ipshort) or \shopfixer (\sfshort) solve.
\item \sfshort{} detects potential failures in $T_u$ using causal-link analysis; 
while costly, this means failures can be more rapidly found and repaired.
\item \ipshort{} predicts the future via simulation, which incurs no additional
  cost unless a disturbance actually occurs.

\item \sfshort{} performs backjumping in $T$ while \ipshort{} performs chronological backtracking, 
while again more costly for \sfshort{}, it can resolve repairs in a more holistic manner. 
\item
Section \ref{sec:theory} proves the solution sets are as in
 Figure~\ref{fig:htn_repair_venn_diagram}:
all overlap, but \rwshort{} and \ipshort{} both find some unique solutions and $\{\text{\sfshort 
solutions}\} \subseteq \{\text{\ipshort 
solutions}\}$.
\newtext{\item To repair a plan, \rwshort{} wants an exact match to the original plan's action prefix. So long as they are valid decompositions, there is no restriction on the methods used to produce said prefix. The rest of the plan is only restricted by decomposition validity without regard to plan stability. 
\item \ipshort{} and \sfshort{} try to preserve plan stability by attempting to restrict their changes to as small a part of the plan tree as possible. Unlike \rwshort{}, they will retry failed tasks \textit{ab initio}. For example, suppose task $T$ is initially expanded to $a,b,c$, and after $b$ is executed $c$'s preconditions are unsatisfied.
 If another method is available that expands $T$ to $a',b',c'$, \ipshort{} and \sfshort{} will try this method, giving a repaired plan of $a,b,a',b'c'$. \rwshort{} would reject this plan, since the plan library cannot generate $a,b,a',b'c'$ from $T$.}
\end{itemize}

\begin{figure}
  \centering
  \includegraphics[width=.7\columnwidth]{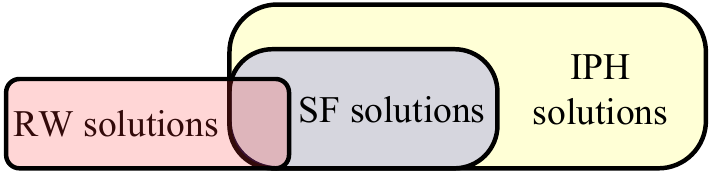}\qquad
  \caption{Venn diagram displaying the relationships among the sets of solutions each algorithm can produce.
   Note that \\
  $\{\text{\rwshort 
solutions}\} \cap \{\text{\ipshort 
solutions}\} \, \subseteq \, \{\text{\sfshort 
solutions}\}$.
  }

  \label{fig:htn_repair_venn_diagram}
\end{figure}

\subsection{\rewrite (\rwshort): Compile a New Problem}
\label{sec:rewrite-alg}

The \emph{rewrite} method of \citet{HollerHTNPlanRepair2020} compiles a new problem and domain definition that is solvable iff the plan can be repaired.  
This compilation forces the planner to build a new plan with a prefix that matches identically $\pi_x$.
As a compilation approach, \rwshort is agnostic to the HTN planner or the search method used.
However, %
there are limits to this flexibility: in practice HTN planners are far
less uniform than PDDL planners.

The new compilation creates a new HTN problem by combining $\Sigma$, $\pi$ (but not $T$), $s_c$, and the disturbance for $a_c$.  
The action $a_c$ is rewritten so that its effects include the disturbance resulting in $s_c$.  
\newtext{A solution to the new problem is any decomposition plan that is consistent with $\pi_x$ and is a complete, grounded decomposition of all the
initial tasks in $T$.}\footnote{Their HDDL~\cite{Hoeller20HDDL} input notation permits problems that have both initial task networks \emph{and} goals.}

\paragraph{\rwshort algorithm implementation}
No runnable implementation of \hetal{}\!'s
algorithm was available,\footnote{Daniel H\"oller, personal
  communication, 26 September 2023.} so we implemented it ourselves; we will share our
implementation on GitHub under an open source license.  
\newtext{\rwshort can use any HDDL-compatible planner, but SHOP3 was the only planner we could find that could parse and solve the experimental problems.\footnote{
We tried Panda, which was only able to parse the smallest domains in 5 minutes (e.g., only Rovers 1-7).  Panda's parsing and grounding algorithms explode in the presence of conditional effects and quantification.  We confirmed this limitation with one of the Panda developers.  Since preparing this manuscript, we have also checked with the IPC2023 HDDL planners Aries \cite{bit2023experimenting} and SIADEX/HPDL \cite{fernandez2021addressing}.  Aries cannot parse our domains, and while SIADEX appears to successfully parse our domains, it is unable to solve them.}}
Our implementation follows the original definition in using HDDL for its
input and output formats.  Our implementation \emph{differs} from the
original definition in being able to handle action and method
\emph{schemas}, rather than only handling ground actions and methods
(\ie, it is a \emph{lifted} implementation).
This required extensions to some parts of the original algorithm.
Our implementation returns a lifted plan repair problem that can be
used directly by the lifted HTN planner \shop{}
\cite{GoldmanKuter:SHOP3ELS}, and that can be grounded for use with
grounded planners.

\subsection{\shopfixer (\sfshort): Examine Causal Links}
\label{sec:shopfixer-alg}

\sfshort \cite{GoldmanETAL:PlanRepair:2020} repairs plans
generated by the forward-searching HTN planner, \shop. 
It constructs a graph of causal links and task decompositions to identify the minimal
subset of $T$ to repair. \sfshort
extends the notion of plan repair stability introduced by
\cite{fox06plan}, and further develops their methods and experiments,
which showed the advantages of plan repair over replanning.

When a disturbance is introduced into the plan at $s_c$, \sfshort 
finds the node $d_f \in T$ whose preconditions are violated; this will either be an action or a method. 
If there is no $d_f$, repair is unnecessary.
Otherwise, \sfshort ``freezes'' $\pi[\preceq a_c]$ and restarts SHOP3 from the parent of $d_f$'s immediate parent. 

Repair proceeds by {\em backjumping} into the search stack and
reconstructing the compromised subtree without the later tasks. 
\sfshort might backjump to decisions prior to $d_f$, but it cannot undo $\pi[\preceq a_c]$.
\sfshort returns $T_x$, $\pi[\preceq a_c]$, and a newly repaired suffix.

Furthermore, if $p$ is the parent of child $c$ in an HTN
plan, then $p$'s preconditions are considered chronologically prior to
$c$'s, because it is the satisfaction of $p$'s preconditions that
enables $c$ to be introduced into the plan: if both $p$ and $c$ fail,
and we repair only $c$, we will still have a failed plan, because
after the disturbance, we are not licensed to insert $c$ or its
successor nodes.

\subsection{\ipyhopper (\ipshort): Repair via  Simulation}
\label{sec:ipyhopper-alg}

\ipshort \cite{Zaidins:HPlan2023} is an augmentation of IPyHOP, a progression-based planner \cite{bansod2022HTNReplanning}.
In contrast to other HTN planners, it does not use projection and instead uses an external simulator to predict action effects.

For plan repair, \ipshort restarts the planning process at \newtext{the task that was decomposed to produce the failed action}
using the current state in place of the stored state.
Initially, \ipshort restricts the process to this subtree and only backtracks further up the tree when all decompositions in the subtree fail.
Once it finds a valid decomposition, \ipshort simulates the action execution going forward. 
If simulation completes, the repair is considered successful. 
If simulation fails at some action $a_f$, \ipshort  restarts a repair process at $a_f$.
This simulation-repair cycle continues until either the plan successfully repaired or root is reached, indicating that
no repaired plan is possible.

\section{Theoretical Results}
\label{sec:theory}

\subsection{Preliminary Definitions}
\label{sec:theory-preliminaries}

We begin by defining applicability of a decomposition tree $T$. This is more restrictive than just the applicability of $\plan(T)$. Not only must the actions' preconditions be satisfied, but also the preconditions of their ancestor methods. At each node of $T$,
the preconditions that must be satisfied are given by a function $\prestar()$ that is defined recursively:\footnote{Formally, $\prestar$ is a function of a node $n$, but we'll simplify the presentation by writing it as a function of the method or task that labels $n$. We hope the intended meaning is clear.}
\begin{itemize}[noitemsep,topsep=0pt,parsep=0pt,partopsep=0pt]
\item
If a ground method $m$ in $T$ is the child of a task $t$, then
$\prestar(m) = \pre(m) \cup \prestar(t).$
\item
If a nonprimitive task $t$ in $T$ is the first child of a ground method $m$, then
$\prestar(t) = \prestar(m)$. Otherwise
$\prestar(t) = \varnothing$.
\item
If a primitive task (i.e., an action) $a$ in $T$ is the first child of a ground method $m$, then $\prestar(t) = \pre(a) \cup \prestar(m)$.
Otherwise $\prestar(t) = \pre(a)$.
\end{itemize} 
We can now define applicability of $T$ in a state $s$.
Let $\pi = \plan(T)$, and suppose that
for every action $a\in \pi$,
the state
$\gamma(s,\pi[\prec a])$ satisfies $\prestar(a).$
Then $T$ is applicable in $s$, and $\gamma(s,T) = \gamma(s,\pi)$.
Otherwise, $T$ is not applicable in $s$, and $\gamma(s,T)$ is undefined.

Finally, we define a notation for modifications to $T$:
$\replace(T,\langle t_1,\ldots,t_n\rangle,\langle T'_1,\ldots,T'_n\rangle)$ is a modified version of $T$ in which $T[t_i]$ is replaced with $T’_i$ for $i=1,\ldots,n$.
As a special case, $\replace(T,t,T') = \replace(T,\langle t \rangle,\langle T'\rangle)$.

\subsection{Plan Repair Definitions}
 
Let $a_c$ be the last executed action, and $s_c$ be the observed current state. We consider four (non-disjoint) classes of HTN plan-repair problems:

\paragraph{Class 1: normal execution.}
If $s_c = \gamma(s_0,\pi[\preceq a_c])$, then $s_c$ is the predicted state, so no repair is needed.

\paragraph{Class 2: anomaly repair.}
If $s_c \neq  \gamma(s_0,\pi[\preceq a_c])$, then $s_c$ is \emph{anomalous}. To repair this condition, the \emph{repair-by-rewrite} definition involves rewriting the planning domain so
that $a_c$ produces the state $s_c$ instead of the state $\gamma(s_0,T[\prec a_c])$ (for details, see \cite{HollerHTNPlanRepair2020}).
Let $\overline{\Sigma}$ be the rewritten domain, and $\overline{\tau}$ be the rewritten version of $\tau$ in $\overline{\Sigma}$.
Let $\overline{T}$ be any solution tree for the rewritten planning problem $(\overline{\Sigma},s_0,\overline{\tau})$, and $\overline{T}_u$ be the unexecuted part of $\overline{T}$. Let $T'_u$ be the corresponding decomposition tree in $\Sigma$ (i.e., with every rewritten action in $\overline{T}_u$ replaced with its un-rewritten equivalent).
Then $T'_u$ is a repair of $T_u$.

Class 2 is the set of plan-repair problems that are defined in
\cite{HollerHTNPlanRepair2020} and are implemented by the \rwshort algorithm. \rwshort requires replanning whenever an anomaly occurs---but unless there is also a predicted task failure (see Class 3), $T'_u$ may be identical to $T_u$.

\paragraph{Class 3: predicted-task-failure repair.}

In addition to $s_c$ being anomalous, suppose $T_u$ isn't applicable in $s_c$. Then there is a task $t_f$ in $T_u$ such that $\gamma(s_c,T_u[\prec t_f]) \not\models \prestar(t_f)$. This is a \emph{predicted task failure},
which may be repaired as follows.
The \emph{repair point} may be any task $t_r$ in $T_u[\preceq a_f]$.
Let
$s_r$ be the state immediately before $T[t_r].$ Note that in some cases, $s_r$ may precede $s_c.$

If $t_r$ is unexecuted,
let $T_r'$ be any solution tree for the planning problem $(\Sigma,s_r,\langle t_r\rangle)$, and let $T_u' = \replace(T_u,t_r,T_r').$
But if $t_r$ is partially executed, let
$\langle t_1,\ldots,t_n\rangle$ be the postorder sequence of all
unexecuted tasks in $T[t_r]$ that have no unexecuted ancestors, $T_r'$ be any solution tree for the planning problem $(\Sigma,s_c,\langle t_1,\ldots,t_n\rangle)$, and $T_u' = \replace(T_u,\langle t_1,\ldots,t_n\rangle,\langle T_r'[t_1],\ldots,T_r'[t_n]\rangle).$
Then a repair of $T_u$ may defined inductively as follows:
\begin{itemize}[noitemsep,topsep=0pt,parsep=0pt,partopsep=0pt]
\item
(base case) if $T_u'$ is applicable in $s_c$, it is a repair of $T_u.$
    
\item
(induction step) otherwise, $T_u'$ must contain an action $a'_f$
 such that $\gamma(s_c,T_u'[\prec a'_f])$ doesn't satisfy $\prestar(a'_f),$ i.e., $a'_f$ is another predicted failure. In this case, if $T''_u$ is any repair of $T'_u,$ then it is also a repair of $T_u.$
\end{itemize}

\paragraph{Class 4: predicted-action-failure repair.}
In addition to the anomaly and predicted task failure, suppose $\gamma(s_c,\pi_u[\prec a_f]) \not\models {\pre(a_f)}$. This is a \emph{predicted action failure}.
The definition of a repair for this condition is the same as in Class 3, except that only the \emph{actions} of $T_u'$ are considered in the repair.  More specifically:
\begin{itemize}[noitemsep,topsep=0pt,parsep=0pt,partopsep=0pt]
\item The base case applies if $\plan(T_u')$ is applicable in $s_c$,
and otherwise the induction step applies.
\item The induction step uses the preconditions of the failed action (i.e., $\pre(a'_f)$) instead of the preconditions of the tree rooted above the failed action (i.e., $\prestar(a'_f)$).
\end{itemize}

\subsection{Theorems}

The theorems in this section establish properties of \ipshort, \sfshort, and \rwshort, 
assuming that they each are called at the same point in the execution of the same plan.
In particular, the theorems establish relationships among these algorithms and the Class 2, 3, and 4 solutions to HTN plan repair problems.
We will use the following terminology:
\begin{itemize}[noitemsep,topsep=0pt,parsep=0pt,partopsep=0pt]
\item
Since \ipshort and \sfshort
 are nondeterministic, each defines a set of possible solutions to a repair problem. We will call these the
\emph{\ipshort solutions} and \emph{\sfshort solutions}, respectively.
\item
As discussed earlier, Class 2 repair problems and their solutions are essentially the same as in \cite{HollerHTNPlanRepair2020}. We will call the solutions the \emph{\rwshort solutions}.
\end{itemize}

\begin{theorem}
\label{th:class-3-equivalence}
In every HTN plan-repair problem, the set of Class 3 solutions is the set of \sfshort solutions.
\end{theorem}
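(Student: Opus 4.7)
The plan is to establish the set equality by proving two inclusions, working from the inductive structure of both the Class 3 definition and the operation of \sfshort{}. The natural induction parameter is the depth of nested repair invocations: Class 3 repairs are defined inductively (a base case where $T_u'$ is applicable in $s_c$, plus an induction step when the attempted repair itself contains a predicted failure $a_f'$), and \sfshort{} mirrors this exactly by re-invoking its repair loop each time the reconstructed subtree turns out to contain another violated precondition.

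For the direction \sfshort{} solutions $\subseteq$ Class 3 solutions, I would trace a single pass of \sfshort{}: it locates a node $d_f$ whose $\prestar$-preconditions fail in the appropriate predicted state, chooses a backjump target (its starting point is the grandparent of $d_f$, then it may backjump higher along the search stack), and restarts \shop{} from there to reconstruct a new subtree without disturbing $\pi[\preceq a_c]$. I would identify the backjump target with a Class 3 repair point $t_r \in T_u[\preceq a_f]$, and the reconstructed subtree with a valid $T_r'$, using the soundness of \shop{} to ensure $T_r'$ is applicable in $s_r$ (or $s_c$ in the partially-executed case, with the postorder sequence of unexecuted ancestor-free tasks playing the role required by Class 3). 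The \replace operation of Class 3 then corresponds to splicing the new subtree into $T_u$ while keeping the frozen executed prefix. If the resulting tree fails downstream, the \sfshort{} loop restarts, which I would match directly to the induction step of Class 3.

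For the direction Class 3 solutions $\subseteq$ \sfshort{} solutions, I would exhibit, for any given Class 3 repair sequence, an execution of \sfshort{} that produces it. Here the nondeterminism of \sfshort{}'s backjumping is exploited: for each $t_r$ appearing in the Class 3 derivation, choose the backjump target to be the corresponding node, and use \shop{}'s completeness to produce the chosen $T_r'$. Recursive repair rounds again line up with the Class 3 inductive step. A small lemma I would prove along the way is that \sfshort{}'s causal-link structure allows backjumping to reach every task in $T_u[\preceq a_f]$ that is admissible as a Class 3 repair point, including non-ancestors of $a_f$ that precede it in postorder and whose decisions are chronologically earlier than the failing node's.

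The main obstacle I anticipate is the interface between \sfshort{}'s explicit, stack-based backjumping rule and Class 3's abstract, purely declarative choice of repair point. Showing that these agree requires verifying two non-obvious things: first, that \sfshort{}'s "parent of $d_f$'s immediate parent" starting point, combined with backjumping, exactly covers $T_u[\preceq a_f]$ and nothing more; and second, that the partial-execution case of Class 3 (where multiple unexecuted top-level tasks of $T[t_r]$ are re-planned as a task sequence under $s_c$) is faithfully reproduced by \sfshort{} when $d_f$ lies inside an already partially executed method. Handling these subtleties, together with careful bookkeeping to avoid undoing $\pi[\preceq a_c]$ in either formalism, is where the real work of the proof will live.
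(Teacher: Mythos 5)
Your two-inclusion, induction-on-repair-depth skeleton matches the shape of the paper's argument, but you are proving a different (and much harder) statement than the paper does, and the hard part of your version is left as an unproven ``small lemma.'' In the paper, the \emph{\sfshort{} solutions} are defined as the outputs of an idealized \emph{nondeterministic} pseudocode (Algorithm~\ref{alg:nondeterministic_shopfixer}), whose single choice point is: nondeterministically pick an ancestor $t_r$ of the failed action $a_f$ together with \emph{any} applicable decomposition tree rooted at $t_r$, splice it in with $\replace$, return $T_u'$ if it is applicable in $s_c$, and otherwise recurse on the first action violating $\prestar$. The proof is then a clause-by-clause correspondence between the branches of that pseudocode and the base case / induction step of the Class~3 definition --- essentially a definitional check, with no appeal to causal links, search stacks, or \shop{}'s search order.

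You instead try to connect Class~3 to the \emph{implemented} \shopfixer{}: the ``parent of $d_f$'s immediate parent'' restart point, stack-based backjumping, and \shop{}'s soundness and completeness. That forces you to discharge exactly the obligation you flag at the end --- that backjumping from the concrete starting node reaches \emph{every} task in $T_u[\preceq a_f]$ that Class~3 admits as a repair point (including postorder predecessors that are not ancestors of $a_f$), and \emph{only} those --- and you do not prove it. It is not obviously true: dependency-directed backjumping visits only nodes implicated by the recorded causal structure, so there is no a priori reason it covers the full declaratively admissible set, and conversely nothing in your argument rules out its landing on a node outside $T_u[\preceq a_f]$. Similarly, invoking ``\shop{}'s completeness to produce the chosen $T_r'$'' presumes the planner can emit an arbitrary applicable subtree, which holds for the nondeterministic abstraction but not for the deterministic implementation. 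If you restate the target as the paper does --- equivalence between Class~3 and the nondeterministic pseudocode --- your base-case/induction-step matching goes through directly and almost all of your machinery (causal-link lemma, backjump coverage, partial-execution bookkeeping) becomes unnecessary; as written, the proof has a genuine gap at its central lemma.
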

\begin{proof}
For the base case of Class 3, $T_u'$ is applicable in $s_c$. 
In Algorithm \ref{alg:nondeterministic_shopfixer}, 
\sfshort, such a $T_u'$ will be returned as it meets the criteria for the second branch of the 
If statement. 
The base case of \sfshort mimics the base case for Class 3. 
For the induction step, $T_u'$ is not applicable in $s_c$. 
Therefore there exists at least one action in $T_u'$, $a_f'$ such that 
$\gamma(s_c,\plan(T_u'[\prec a_f'])) \not\models \prestar(a_f')$. 
The third branch of the If statement will handle this and find the first such action, $a_f'$. 
It will then make a recursive call using $T_u'$ and $a_f'$, nondeterministically find an ancestor 
task $t_r$ of $a_f'$, and produce a repaired subtree $T_r$ rooted at $t_r$.
The unexecuted solution tree will then have the repaired subtree inserted as in the replace protocol described. 
This will continue until either plan repair fails or the base case is reached. 
Thus nondeterministic \sfshort mimics the induction step of Class 3. 
As the base case and induction step of Class 3 are mimicked by \sfshort, the set of all possible outputs 
of \sfshort when there is a possible repair is equivalent to Class 3. If no Class 3 repair exists, 
the algorithm will exit at line 5 without returning any solutions, 
hence the set of Class 3 solutions is again the set of solutions returned by \sfshort.
\end{proof}

\begin{theorem}
\label{th:class-4-equivalence}
In every HTN plan-repair problem, the set of Class 4 solutions is the set of \ipshort solutions.
\end{theorem}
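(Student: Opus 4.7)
The plan is to mirror the strategy used to prove Theorem \ref{th:class-3-equivalence}, exploiting the structural parallelism between the Class 3/Class 4 definitions and the \sfshort/\ipshort algorithms. The key observation is that the only differences between Class 3 and Class 4 are (i) Class 4's base case tests applicability of $\plan(T_u')$ in $s_c$ rather than applicability of $T_u'$ itself, and (ii) Class 4's induction step fires on a failed action $a_f'$ whose ordinary preconditions $\pre(a_f')$ are unsatisfied, rather than on a node whose $\prestar$ is unsatisfied. I would argue that these two modifications correspond exactly to the two architectural features that distinguish \ipshort from \sfshort, namely that \ipshort uses forward simulation of the action sequence (so only action preconditions get tested) and that it restarts repair at the task that produced a failed action once simulation detects the failure.

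First I would set up the double inclusion as two directions. For the forward direction (\ipshort solution $\Rightarrow$ Class 4 solution), I would argue by induction on the number of simulation-repair cycles \ipshort performs before returning. In the zero-cycle case, \ipshort returned because forward simulation of $\plan(T_u')$ completed without violating any action precondition in $s_c$; this is exactly the Class 4 base case. In the inductive case, \ipshort detected some predicted action failure $a_f'$, restarted repair at the task decomposed to produce $a_f'$ (possibly backtracking further up the tree when all decompositions of that subtree fail), produced a repaired subtree, and recursed. The repaired subtree is obtained by replanning for a task $t_r$ that is an ancestor of $a_f'$ in $T_u'$, which matches the \replace{} operation in Class 4's induction step; by the inductive hypothesis the recursive repair is itself a Class 4 solution, hence so is the outer tree.

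For the reverse direction (Class 4 solution $\Rightarrow$ \ipshort solution), I would induct on the definition of Class 4 repairs. In the base case, $\plan(T_u')$ is applicable in $s_c$, so \ipshort's simulation of $\plan(T_u')$ succeeds and \ipshort may return $T_u'$. In the induction step, the Class 4 definition picks some repair point $t_r$ that is an ancestor of a predicted action failure $a_f'$, plans a replacement $T_r'$, and then recursively repairs the resulting $T_u'$. I would argue that \ipshort's nondeterministic choices (which decomposition to try at each method, when to backtrack to a strictly higher ancestor) are rich enough to reach any such $t_r$ and produce any such $T_r'$, and that its simulation step will isolate the same $a_f'$ as the Class 4 definition since both rely on the first action whose $\pre$ is unsatisfied during forward execution from $s_c$. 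The inductive hypothesis then finishes the argument.

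The main obstacle I anticipate is the reverse direction's nondeterminism-matching step: I must establish that \ipshort's backtracking discipline does not prematurely exclude any Class 4 repair point $t_r$. In particular, \ipshort first confines repair to the subtree rooted at the task that decomposed to $a_f'$, and only escalates upward when local repair is exhausted; I need to show that for any Class 4 choice of $t_r$ higher in the tree, \ipshort can still reach it by failing the lower-level attempts. I would lean on a well-founded induction over the height of $t_r$ above $a_f'$ in $T_u'$, together with the fact that \ipshort's backtracking is exhaustive up to the root, so that every ancestor of $a_f'$ is eventually considered. A secondary concern is the partially-executed-task case: I will need to check that \ipshort's handling of a partially executed $t_r$ (replanning for the postorder sequence of unexecuted descendants) agrees with the $\langle t_1,\ldots,t_n\rangle$ construction in the Class 4 definition, which should follow directly from the definitions of $T_x$ and $T_u$ together with \ipshort's use of $s_c$ as the planning state.
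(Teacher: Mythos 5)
Your proposal takes essentially the same route as the paper: the published proof simply observes that the nondeterministic \ipshort{} pseudocode (Algorithm \ref{alg:nondeterministic_ipyhopper}) differs from the nondeterministic \sfshort{} pseudocode (Algorithm \ref{alg:nondeterministic_shopfixer}) in exactly the two respects in which Class 4 differs from Class 3, and lets the argument for Theorem \ref{th:class-3-equivalence} carry over, while your double-inclusion inductions spell out that same correspondence in more detail. Your anticipated obstacle about \ipshort's backtracking discipline prematurely excluding repair points is moot for the theorem as stated, since ``\ipshort{} solutions'' are defined by the nondeterministic pseudocode, whose choice of $t_r$ ranges freely over all ancestors of $a_f$.
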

\begin{proof} 
Class 4 is the same as Class 3, except that:
\begin{itemize}[noitemsep,topsep=0pt,parsep=0pt,partopsep=0pt]
\item
The base case applies if $\plan(T_u')$ is applicable in $s_c$, and otherwise the induction step applies.
\item
In the induction step, $\prestar(a'_f)$ is replaced with $\pre(a'_f)$.
\end{itemize}
Algorithm \ref{alg:nondeterministic_ipyhopper} is the same as Algorithm \ref{alg:nondeterministic_shopfixer}, except for the above as well.
As Class 3 is to Class 4, so too is \sfshort to \ipshort. As Class 3 is equivalent to the set of solutions producible by \sfshort, Class 4 is equivalent to the set of solutions producible by \sfshort.
\end{proof}

\begin{figure}
\centering
\includegraphics[width=\columnwidth]{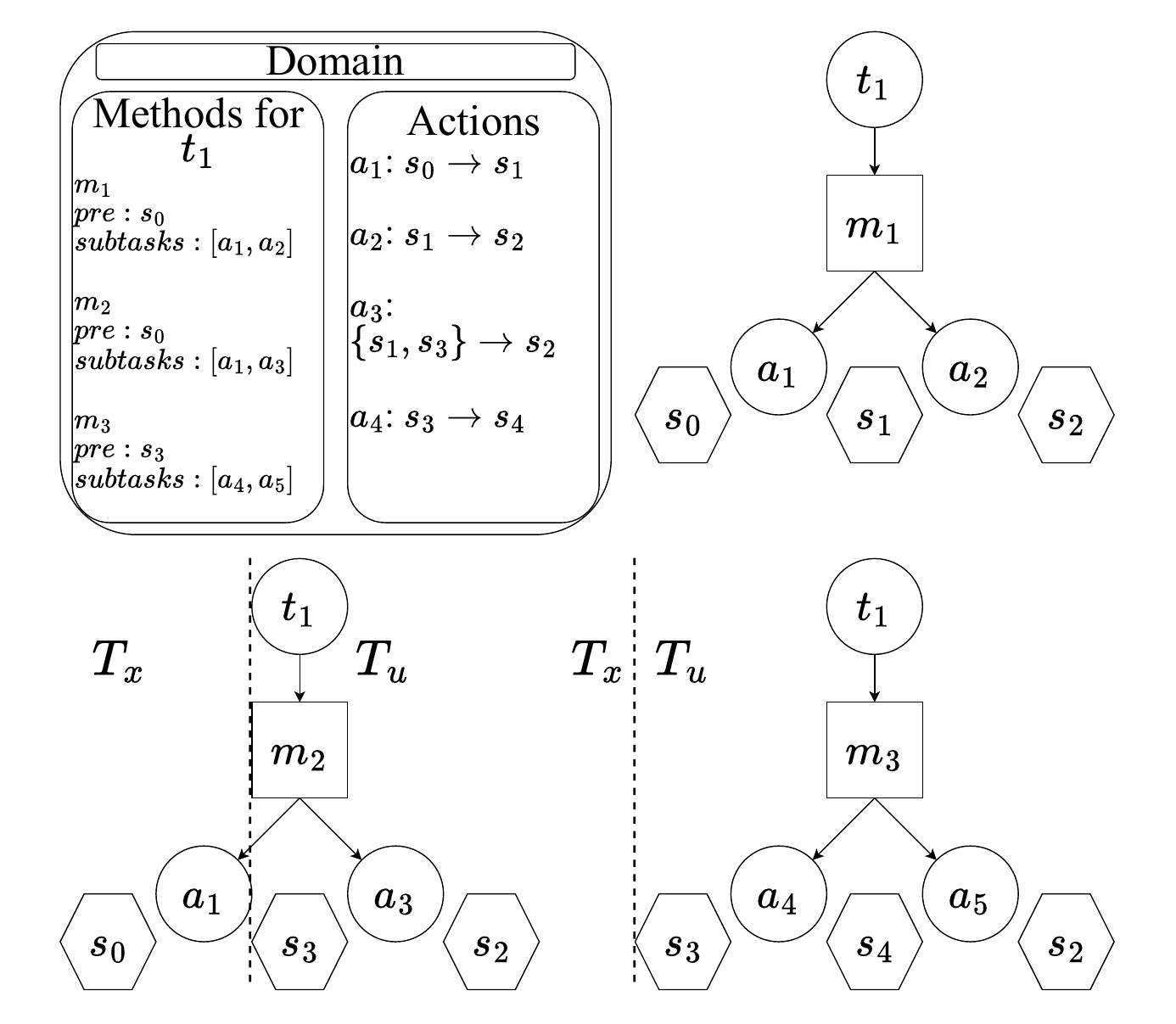}\qquad
\caption{Example demonstrating that there are classes of problems
  that \rwshort can solve but \sfshort and \ipshort cannot, and \textit{vice
  versa}. }
\label{fig:rewrite_vs_shopfixer}
\end{figure}

The remaining theorems in this section establish relationships among the sets of \rwshort, \sfshort, and \ipshort solutions.
The Venn diagram in 
Figure \ref{fig:htn_repair_venn_diagram} illustrates these relationships. 

For Theorems \ref{th:rewrite_not_shopfixer} and \ref{th:shopfixer_not_rewrite}, our proofs use the example shown in Figure \ref{fig:rewrite_vs_shopfixer}. The upper tree is the original solution tree, the lower left
is the tree that is a \rwshort tree and not a \sfshort tree, and the lower right is the \sfshort tree that is not a \rwshort tree. $a_1$ is executed, but an anomalous state, $s_3$, is observed rather than the expected state $s_1$. The state to each action's immediate left is the action's input state, with $s_2$ being the predicted ending state.

\begin{theorem}
There are HTN plan-repair problems that have \rwshort solutions but no \sfshort solutions. 
\label{th:rewrite_not_shopfixer}
\end{theorem}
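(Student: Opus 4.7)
The plan is to exhibit a witness problem, using exactly the example in Figure~\ref{fig:rewrite_vs_shopfixer}, and then to argue separately that (i) the lower-left tree is a legitimate \rwshort{} solution and (ii) no tree at all is a legitimate \sfshort{} (equivalently, Class~3) solution.

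First I would set up the problem formally from the figure: the original solution tree $T$ with root task $t_0$, an expansion via method $m$ into subtasks whose leaves are $a_1, a_2, \ldots$, initial state $s_0$, executed action $a_c = a_1$, predicted post-state $s_1$, and observed post-state $s_3 \ne s_1$ (the anomaly). I would then specify the domain so that (a) the upper tree is applicable in $s_0$, (b) the anomalous state $s_3$ does \emph{not} satisfy $\prestar$ of any remaining task under $m$, so every continuation of the executed subtree rooted above $a_1$ fails, but (c) there is an \emph{alternative} method $m'$ at the root that decomposes $t_0$ into a sequence beginning with $a_1$ and then continuing with actions whose $\prestar$ is satisfied starting from $s_3$. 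The lower-left tree of Figure~\ref{fig:rewrite_vs_shopfixer} is exactly such a tree.

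Next I would verify that the lower-left tree is an \rwshort{} solution. By the Class~2 definition, one rewrites the domain so that $a_1$ produces $s_3$ instead of $s_1$, and looks for any solution tree for $(\overline{\Sigma}, s_0, \overline{\tau})$. Since \rwshort{} places no restriction on the methods used to derive the action prefix (only the action sequence up through $a_c$ must match), the alternative decomposition via $m'$ is admissible, and by construction it is applicable from $s_0$ in the rewritten domain. Hence the lower-left tree is a \rwshort{} solution.

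Then I would argue that no \sfshort{} solution exists. By Theorem~\ref{th:class-3-equivalence}, the \sfshort{} solutions coincide with the Class~3 solutions, all of which are of the form $\replace(T_u, \langle t_1,\ldots,t_n\rangle, \langle T'_1,\ldots,T'_n\rangle)$ or iterated repairs thereof---that is, they preserve the executed subtree $T_x$ (in particular, the root method $m$ and the ancestor chain above $a_1$) and only repair within $T_u$. Since every repair point $t_r$ lies within $T_u[\preceq a_f]$ under the fixed decomposition via $m$, and since by construction $s_3$ fails $\prestar$ of every such candidate, the induction of Class~3 cannot reach a base case. Therefore no Class~3 solution, and hence no \sfshort{} solution, exists. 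The main obstacle is choosing the domain so that every remaining task under $m$ really is blocked by $s_3$ while the $m'$-decomposition succeeds; once the witness is instantiated, the rest is a straightforward appeal to the definitions of Class~2 and Class~3 repair together with Theorem~\ref{th:class-3-equivalence}.
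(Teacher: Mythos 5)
Your proposal is correct and follows essentially the same route as the paper: both use the witness of Figure~\ref{fig:rewrite_vs_shopfixer}, arguing that the lower-left tree is a \rwshort{} solution because \rwshort{} may re-derive the executed prefix $a_1$ via an alternative method whose preconditions are evaluated at $s_0$, while \sfshort{} (equivalently Class~3, via Theorem~\ref{th:class-3-equivalence}) is locked into the original decomposition of the executed part and, with only the two methods in the domain, no repair of $T_u$ is applicable from the anomalous state $s_3$. Your write-up is somewhat more explicit than the paper's about the domain conditions needed to make the witness work, but the underlying argument is the same.
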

\begin{proof}
In the bottom left of Figure \ref{fig:rewrite_vs_shopfixer}, we show a \rwshort solution tree that is not a \sfshort solution. The tree is applicable in the initial state, $s_0$, but not in the current state $s_3$. \sfshort would require the parent task of $a_3$, $t_1$, to be decomposed with respect to the observed state, $s_3$. \rwshort allows the methods of executed action be arbitrarily modified so long as the executed action prefix is preserved. \newtext{ With only $m_1$ and $m_2$ in the domain, the problem would have a \rwshort solution, but not a \sfshort solution. Thus there exist HTN plan-repair problems that have RW solutions but no SF solutions.}
\end{proof}

\begin{theorem}
There are HTN plan-repair problems that have \sfshort solutions but no \rwshort solutions.
\label{th:shopfixer_not_rewrite}
\end{theorem}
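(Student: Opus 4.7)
The plan is to mirror the structure of the proof of Theorem~\ref{th:rewrite_not_shopfixer}, but use the bottom-right tree of Figure~\ref{fig:rewrite_vs_shopfixer} as the witness. First, I would confirm that this tree is an \sfshort solution: following \sfshort's repair procedure, the predicted failure in $s_3$ triggers backjumping to the ancestor task $t_1$, which is then re-decomposed ab initio in $s_3$ using a method that is applicable in $s_3$ but whose selection would not have arisen in the original top-down search from $s_0$. By Theorem~\ref{th:class-3-equivalence}, any tree produced this way is a Class 3 solution, so the check reduces to verifying that the replacement subtree satisfies $\prestar$ relative to $s_3$.

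Next, I would argue no \rwshort solution exists for the same problem. By the Class 2 definition, an \rwshort solution must be a complete top-down decomposition of the root task $t_0$ in the rewritten domain whose leaf sequence begins with the executed prefix $\pi_x = \langle a_1 \rangle$. Restricting the domain to $\{m_1,m_2\}$ as in the figure, I would enumerate all decompositions of $t_0$ whose first leaf is $a_1$ and show that each of them is forced, by the structure of $m_1$ and $m_2$, to continue with the original failing suffix rather than with any suffix realizable by the \sfshort re-decomposition of $t_1$. Because the \sfshort tree was obtained by a subtree replacement under $t_1$ that is not reachable from any top-down derivation of $t_0$ in the rewritten problem, no consistent \rwshort decomposition can reproduce it, and no other \rwshort repair is possible in this domain either.

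The main obstacle is ensuring that the two-method domain $\{m_1,m_2\}$ truly admits no \rwshort decomposition, rather than just ruling out the specific suffix used by \sfshort. This requires a finite but careful case analysis: for each method applicable to $t_0$, trace its subtasks and check that no combination of methods in $\{m_1,m_2\}$ yields a legal top-down decomposition whose leaves begin with $a_1$ and continue with \emph{any} action sequence that is applicable in the rewritten problem starting from $s_0$. The enumeration is short because the domain is small, but it must be spelled out explicitly so the non-existence claim rests on exhaustion rather than intuition. Once that case analysis is in place, the conclusion that the constructed problem has an \sfshort solution but no \rwshort solution follows immediately.
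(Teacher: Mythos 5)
Your overall strategy matches the paper's: take the bottom-right tree of Figure~\ref{fig:rewrite_vs_shopfixer} as the \sfshort{} witness (a re-decomposition of $t_1$ that is applicable in the observed state $s_3$), and then shrink the method library so that no \rwshort{} solution survives. However, there is a concrete error in the construction: you restrict the domain to $\{m_1,m_2\}$, which is exactly the restriction the paper uses for Theorem~\ref{th:rewrite_not_shopfixer} (the opposite direction). In the figure, $m_2$ is the method that yields the bottom-left, \rwshort{}-only tree, and $m_3$ is the method that yields the bottom-right, \sfshort{}-only tree. So with $\{m_1,m_2\}$ your argument breaks in both directions at once: the \sfshort{} witness tree cannot even be built, because the method it uses to re-decompose $t_1$ in $s_3$ ($m_3$) has been removed from the domain; and the \rwshort{} non-existence claim is false, because the bottom-left tree built from $m_1$ and $m_2$ \emph{is} an \rwshort{} solution. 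Your proposed ``finite case analysis'' over $\{m_1,m_2\}$ would therefore terminate by finding an \rwshort{} solution rather than ruling one out.

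The fix is to restrict the domain to $\{m_1,m_3\}$, as the paper does: keeping $m_3$ guarantees the \sfshort{} repair of $t_1$ exists and is applicable in $s_3$, while dropping $m_2$ removes the only way to rederive, top-down from $s_0$ in the rewritten problem, a complete decomposition whose leaf sequence begins with the executed prefix $\langle a_1\rangle$. The remaining substance of your argument --- that the \sfshort{} tree is obtained by a subtree replacement under $t_1$ that is not reachable as a top-down derivation of the root in the rewritten problem, and hence fails \rwshort{}'s prefix-preservation requirement --- is the right reason and agrees with the paper's, which states that the bottom-right tree is applicable in $s_3$ but not in $s_0$ and so violates the executed-prefix condition. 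With the corrected method set, your exhaustive check over the (now two-method) library is a reasonable, if slightly more laborious, way to make the non-existence claim rigorous.
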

\begin{proof}
In the bottom right of Figure \ref{fig:rewrite_vs_shopfixer}, we show a solution tree that \sfshort can produce, but not \rwshort. The tree is applicable in the current state, $s_3$, but not in the initial state $s_0$. \rwshort requires the preservation of the executed action prefix in the new solution tree, which is violated here. \newtext{ With only $m_1$ and $m_3$ in the domain, the problem would have a \sfshort solution, but not a \rwshort solution. Thus there exist HTN plan-repair problems that have SF solutions but no RW solutions.}
\end{proof}

\begin{theorem}
\label{th:shopfixer-subset-ipyhopper}
For every HTN plan-repair problem, 
every \sfshort{} solution is also an \ipshort{} solution.
\end{theorem}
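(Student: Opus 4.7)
My plan is to combine the two equivalence theorems already proved (Theorems~\ref{th:class-3-equivalence} and \ref{th:class-4-equivalence}) to reduce the claim to a purely definitional containment: every Class~3 solution is also a Class~4 solution. Concretely, since \sfshort{} solutions coincide with Class~3 repairs and \ipshort{} solutions coincide with Class~4 repairs, it suffices to show $\{\text{Class 3 repairs of }T_u\} \subseteq \{\text{Class 4 repairs of }T_u\}$ for any fixed $(T, a_c, s_c)$.

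The key technical observation is that the two classes differ only in the strength of their applicability test: Class~3 requires tree-applicability of $T_u'$ in $s_c$ (every action satisfies $\prestar$ when reached), while Class~4 requires only that $\plan(T_u')$ be applicable in $s_c$ (every action satisfies $\pre$ when reached). From the recursive definition of $\prestar$ in Section~\ref{sec:theory-preliminaries}, $\pre(a) \subseteq \prestar(a)$ for every action $a$ appearing in any decomposition tree. Hence whenever $\gamma(s_c, T_u'[\prec a]) \models \prestar(a)$ it also satisfies $\pre(a)$, so tree-applicability immediately implies plan-applicability. In particular, any Class~3 base-case repair is automatically a Class~4 base-case repair.

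For the inductive case I would argue by induction on the depth of the Class~3 derivation. Suppose $T_u''$ is certified as a Class~3 repair of $T_u$ via an intermediate tree $T_u'$, a predicted failure $a_f'$ at which $\prestar$ is violated, a repair point $t_r$, and a subtree-replacement $T_r'$ applicable at $s_r$ (or $s_c$, in the partially-executed case). I need to exhibit a Class~4 derivation of the same $T_u''$. I take the same $T_u'$, the same $t_r$, and the same $T_r'$; this is legitimate because (i) $T_r'$ is a solution tree in $\Sigma$ in both classes' definitions (the solution-tree requirement does not depend on whether we later use $\prestar$ or $\pre$ for the outer applicability check), and (ii) the Class~4 induction step additionally requires that $a_f'$ be a predicted \emph{action} failure, i.e., that $\gamma(s_c, T_u'[\prec a_f']) \not\models \pre(a_f')$. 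If $a_f'$ happens to satisfy $\pre$ at $s_c$, then by the containment $\pre \subseteq \prestar$ on other actions and the fact that $\prestar(a_f')$ fails, there must be an \emph{earlier} action in $T_u'$ whose $\pre$ also fails (otherwise $\plan(T_u')$ would execute cleanly up through $a_f'$, giving $\pre(a_f')$ satisfied and all prior $\prestar$ subsumed); I pick that earliest one as the Class~4 failure and use the same repair point $t_r$, then invoke the induction hypothesis on the Class~3 repair of $T_u'$.

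The main obstacle is exactly this last alignment: the Class~3 derivation may choose a failure node $a_f'$ that is not the ``first'' $\pre$-failure in $\plan(T_u')$, so the Class~4 derivation must sometimes pick a different (earlier) failure node while reusing the same repair point and replacement subtree. I would handle this by noting that the definitions of both classes quantify existentially over the choice of failure point and repair point, so it suffices to exhibit any witness — the earliest $\pre$-failure in $\plan(T_u')$ — and show that the chosen $t_r$ still lies in the requisite ancestor region. This last geometric check (that $t_r$ remains a valid repair point relative to the earlier failure) is the only spot where I anticipate needing a careful argument using the postorder structure of $T_u'$ and the fact that a repair point for $a_f'$ in Class~3 is any ancestor-or-equal task, which automatically dominates every earlier leaf in postorder as well.
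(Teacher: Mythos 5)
Your overall strategy---reduce the claim via Theorems~\ref{th:class-3-equivalence} and \ref{th:class-4-equivalence} to the containment ``every Class~3 repair is a Class~4 repair,'' then observe that tree-applicability implies plan-applicability and that $\prestar(a) \models \pre(a)$---is exactly the route the paper takes, and your base-case argument is fine. The paper's own proof essentially stops there, declaring both modifications to be ``loosenings'' of the Class~3 conditions.

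However, the third paragraph of your argument, where you try to make the induction step rigorous, rests on a false claim. You assert that if $\gamma(s_c,T_u'[\prec a_f'])\not\models\prestar(a_f')$ while $\pre(a_f')$ is satisfied there, then some \emph{earlier} action of $\plan(T_u')$ must violate its own $\pre$, ``otherwise $\plan(T_u')$ would execute cleanly up through $a_f'$ \ldots{} and all prior $\prestar$ subsumed.'' This does not follow: $\prestar(a_f')\setminus\pre(a_f')$ consists of \emph{method} preconditions of $a_f'$'s ancestors, and these are not entailed by the action preconditions of any action in the plan. A tree can be plan-applicable (every action's $\pre$ holds when reached) yet fail tree-applicability because an ancestor method's precondition is violated---for instance, a method whose precondition is clobbered by the disturbance but whose subtasks are actions with trivially satisfiable preconditions. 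In that situation $\plan(T_u')$ contains no $\pre$-failure anywhere, the Class~4 induction step has no witness and cannot be invoked, and your proposed Class~4 derivation of $T_u''$ breaks off at $T_u'$. This is precisely the case a rigorous proof must confront (it is also the case the paper's one-line ``loosening'' argument glosses over, since the induction-step condition is \emph{strengthened}, not weakened, when $\prestar$ is replaced by $\pre$), so your proof as written does not close it. Your final ``geometric'' worry is also real rather than routine: an ancestor of $a_f'$ need not be an ancestor of an earlier failing leaf lying in a preceding sibling subtree, so even when an earlier $\pre$-failure does exist you cannot automatically reuse the same repair point $t_r$.
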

\begin{proof}
Recall that the \sfshort solutions and \ipshort solutions are the Class 3 and Class 4 solutions, respectively. 
If $T_u'$ is applicable in $s_c$, then $\plan(T_u')$ is applicable in $s_c$. 
If $\plan(T_u')$ is not applicable in $s_c$, $T_u'$ is not applicable in $s_c$. 
If $s_c \models \prestar(a_f')$, then $s_c \models \pre(a_f')$ as $\prestar(a_f') \models \pre(a_f')$.
Thus for each change we made to Class 3 to describe Class 4 we have loosened the constraints without 
adding constraints. 
Every Class 4 solution meets the criteria of Class 3 solutions, 
but not always the reverse. 
Thus the set of \sfshort solutions is a subset (and often a proper subset) of the \ipshort solutions.
\end{proof}

\begin{corollary}
There are HTN plan-repair problems that have \ipshort solutions but no \rwshort solutions.
\end{corollary}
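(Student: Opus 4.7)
The plan is to derive this corollary directly from the two immediately preceding results by chaining them. Theorem~\ref{th:shopfixer_not_rewrite} already produces a witness problem with an \sfshort{} solution but no \rwshort{} solution, and Theorem~\ref{th:shopfixer-subset-ipyhopper} tells us that every \sfshort{} solution is also an \ipshort{} solution. So the same witness should suffice for the corollary, with no new construction needed.

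Concretely, first I would invoke Theorem~\ref{th:shopfixer_not_rewrite} to fix an HTN plan-repair problem $P$ (for instance, the one in Figure~\ref{fig:rewrite_vs_shopfixer} restricted to the methods $m_1$ and $m_3$) together with a solution tree $T'$ that is an \sfshort{} solution to $P$ but is not an \rwshort{} solution. Second, I would apply Theorem~\ref{th:shopfixer-subset-ipyhopper} to conclude that $T'$ is also an \ipshort{} solution to $P$. Finally, I would note that the nonexistence of \emph{any} \rwshort{} solution to $P$ is precisely the second half of the statement of Theorem~\ref{th:shopfixer_not_rewrite}, so $P$ has an \ipshort{} solution but no \rwshort{} solution, which is exactly what the corollary asserts.

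There is essentially no obstacle here: the argument is a one-line composition of the two preceding theorems, and no inductive unpacking of the Class~2--4 definitions is required beyond what those theorems already handle. The only minor care needed is to make sure the existential quantifiers line up correctly, namely that the same problem $P$ is serving as witness for both ``has an \ipshort{} solution'' (via $T'$) and ``has no \rwshort{} solution'' (by the conclusion of Theorem~\ref{th:shopfixer_not_rewrite}), which it does by construction.
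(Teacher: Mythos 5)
Your proposal is correct and matches the paper's proof, which simply states the corollary is immediate from the two preceding theorems: take the witness problem from Theorem~\ref{th:shopfixer_not_rewrite} and upgrade its \sfshort{} solution to an \ipshort{} solution via Theorem~\ref{th:shopfixer-subset-ipyhopper}. Your spelled-out version, including the check that the same problem $P$ witnesses both halves of the claim, is exactly the intended argument.
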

\begin{proof}
Immediate from the two preceding theorems.
\end{proof}

\begin{theorem}
\label{th:rewrite-ipyhopper-shopfixer}
In every HTN plan-repair problem, if a solution is both a \rwshort solution and \ipshort solution, then it also is a \sfshort solution.
\end{theorem}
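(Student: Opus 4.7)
My plan is to exploit the equivalences from Theorems \ref{th:class-3-equivalence} and \ref{th:class-4-equivalence}. These identify the \sfshort{} solution set with the Class 3 solutions and the \ipshort{} solution set with the Class 4 solutions, while by definition \rwshort{} solutions are Class 2 solutions. Hence the statement reduces to showing that every solution that is simultaneously Class 2 and Class 4 is also Class 3.

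I would first fix a $T_u'$ that is both Class 2 and Class 4 and unpack the two hypotheses. From Class 2 we obtain a full tree $\overline{T}$ applicable at $s_0$ in the rewritten domain $\overline{\Sigma}$ whose un-rewritten unexecuted part is $T_u'$. Because only $a_c$ is rewritten and $a_c$ lies in the executed prefix, executing $\overline{T}$ from $s_0$ in $\overline{\Sigma}$ arrives at $s_c$ exactly after the prefix, and after that point the state trajectory of $\overline{T}$ coincides with the trajectory of $T_u'$ executed in $\Sigma$ from $s_c$. From Class 4 we have $\plan(T_u')$ applicable at $s_c$, so $\pre(a)$ is satisfied at the appropriate state for every action $a \in T_u'$.

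Next I would verify Class 3's base case, namely that $T_u'$ is applicable at $s_c$ in the $\prestar$ sense. For each action $a \in T_u'$, the extra content of $\prestar(a)$ beyond $\pre(a)$ consists of $\pre(m)$, accumulated recursively, for every ancestor method $m$ of $a$ in $T_u'$ for which $a$ is the leftmost descendant action of $m$ in $T_u'$. I would draw these method preconditions from the Class 2 hypothesis: applicability of $\overline{T}$ at $s_0$ ensures $\pre(m)$ held immediately before $m$'s leftmost descendant action in $\overline{T}$ executed, and whenever that descendant coincides with $a$, the trajectory-equality observation above transports the precondition to exactly the state we need.

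The main obstacle is the case where $m$ is an ancestor of $a$ in $T_u'$ but is partially executed in $\overline{T}$, so that $m$'s leftmost descendant in $\overline{T}$ lies in the prefix and the Class 2 hypothesis only delivers $\pre(m)$ at a prefix state. My plan for this case is to invoke the Class 3 induction step: take $t_r$ to be the partially-executed ancestor (or its parent task in $T_u'$), use the Class 4 applicability of the unexecuted descendants to construct a valid $T_r'$ in $(\Sigma, s_c, \cdot)$, and argue that the resulting $T_u''$ remains simultaneously Class 2 and Class 4, so that iteration produces a Class 3 solution in finitely many steps. If that induction does not close directly, I would pivot to a contrapositive argument: assuming $T_u'$ is not Class 3, I would track any persistent $\prestar$ failure back to a contradiction with either the Class 2 applicability of $\overline{T}$ at $s_0$ or the Class 4 applicability of $\plan(T_u')$ at $s_c$.
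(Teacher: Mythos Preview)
Your high-level strategy matches the paper's exactly: use the \rwshort{} hypothesis to obtain an applicable full tree $\overline{T}$ whose unexecuted part is $T'_u$, use the \ipshort{} hypothesis to get $\plan(T'_u)$ applicable in $s_c$, and combine these to conclude that $T'_u$ itself is applicable in $s_c$, i.e., satisfies the Class~3 base case. The paper does this in a few lines: it observes that $T'_u$ is a subtree of $\overline{T}$, invokes the principle that ``every subtree of an applicable solution tree must be applicable,'' and pairs this with the \ipshort{} hypothesis to conclude directly that $T'_u$ is applicable in $s_c$.

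Where your proposal goes astray is in the handling of your ``main obstacle.'' Invoking the Class~3 induction step is not a valid resolution: that step outputs a \emph{different} tree $T''_u$, so at best it would certify $T''_u$ as Class~3, not the given $T'_u$. Since the theorem claims that the specific $T'_u$ under consideration is an \sfshort{} solution, producing some other Class~3 repair does not close the argument; and your plan to ``argue that the resulting $T''_u$ remains simultaneously Class~2 and Class~4'' just restarts the problem for a new tree without guaranteeing convergence back to $T'_u$. The contrapositive fallback is too vague to evaluate. The paper sidesteps this detour entirely by absorbing the partially-executed-method issue into its subtree-applicability claim: the method preconditions you worry about are regarded as already discharged by the applicability of $\overline{T}$, so only action-level applicability at $s_c$ remains, which is exactly what \ipshort{} supplies. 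Your careful case split is more explicit than the paper's, but the fix should be this direct inheritance argument rather than an appeal to the inductive repair process.
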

\begin{proof}
If $T'_u$ is a \rwshort{} solution, $\overline{T}$ must be applicable in $s_0$. $\overline{T}_u[\succ a_c]$ is the same as $T'_u$, as $\overline{\Sigma}$ is the same as $\Sigma$ beyond the executed action prefix. If $T'_u$ is achievable by \ipshort, $\plan(T'_u)$ is applicable in $s_c$. $T'_u$ is a subtree of $\overline{T}$. Every subtree of an applicable solution tree must be applicable. If $\plan(T'_u)$ is applicable in $s_c$ and $T'_u$ is a subtree of $\overline{T}$, $T'_u$ is applicable in $s_c$. This satisfies the \sfshort criterion of repair and therefore any solution that is both a \rwshort solution and an \ipshort solution must also be a \sfshort solution. %
\end{proof}

\hide{\subsection{Discussion}
\label{sec:theory-discussion}

The theorems establish subset relationships among the sets of solutions that can be found by the nondeterministic \rwshort, 
\sfshort, and \ipshort algorithms. 
}

\begin{algorithm2e}{
\caption{Nondeterministic \ipshort pseudocode}
\label{alg:nondeterministic_ipyhopper}
\small
\SetKwFunction{IPyHOPPER}{\ipshort}
\SetKwFunction{replace}{replace}
\SetKw{return}{return}
\SetKw{continue}{continue}
\SetKw{break}{break}
\SetKw{None}{None}
\SetKwProg{Fn}{Def}{:}{}

\Fn{\ipshort{ current execution state: $s_c$, unexecuted solution tree: $T_u$, failed action: $a_f$ }}{
    $t_r, T_r' \leftarrow$ nondeterministically choose from set of tuples $\{ t_a, T_a$ | $t_a$ is an ancestor of $a_f$, decomposition tree $T_a$ is rooted at $t_a$ and applicable in $ \gamma ( s_c, T_u[ \prec  a_f] \}$ \;
    $T_u' \leftarrow \replace ( T_u, t_r, T_r' )$\;
    \uIf{ $T_r'$ is \None }{
        \return False \;
        
    }
    \uElseIf{ ${\plan(T_u')}$ is applicable in $s_c$ \label{alg:nondeterministic_ipyhopper:T_u} }{ 
        \return $T_u'$ \;
        
    }   
    \uElse{
        $a_f' \leftarrow$ first action, $a \in T_u'$, where ${\gamma ( s_c, T_u'[ \prec  a] )  \not\models \pre(a)}$ \; \label{alg:nondeterministic_ipyhopper:pre}
        \return \ipshort{$s_c, T_u', a_f'$} \;
    }
}
}\end{algorithm2e}

\section{Experimental Design}
\label{sec:experimental-design}

\newabbrev{\numBatches}{50}

We tested \sfshort{},
\ipshort{}, and \rwshort{}
 on a
set of identical initial plans and disturbances from three domains:
\textit{Rovers}, \textit{Satellite}, and \textit{Openstacks}.  These
are all HTN domains, formalized equivalently in HDDL and in \shop{}'s
input language.  All of these domains were adapted from International
Planning Competition (IPC)
PDDL domains predating the HTN track,
with HTN methods added and PDDL goals translated into tasks.
These domains (with slightly different disturbances)
were used in a previously published evaluation of
the \sfshort{} plan repair method~\cite{GoldmanETAL:PlanRepair:2020}.

The Satellite and Rover domains each have 20 problems, and the
Openstacks domain has 30.  For each domain, we ran \numBatches{}
batches, where each batch was a run of each problem with one injected
disturbance, randomly chosen and randomly placed in the original plan.
All original plans were generated by \shop{}; they were translated
into HDDL for \ipshort{} and \rwshort{}.  For \ipshort{}, all
inputs were translated into JSON to avoid the need for a new HDDL
parser.  
Runtimes were measured to the nearest hundredth of a
second.  All runtimes were wall-clock times, not CPU times. Runs were forcibly terminated after 300s and in such cases were marked as having failed. \newtext{A runtime is successful when a valid repaired plan is returned.} 

\subsection{Planning Domains}

Here we briefly introduce the domains and deviation operators used in our experiments.  All three domains used the PDDL/HDDL ADL dialect.  Domains were modeled equivalently in both HDDL and
\shop{}; we have indicated below where the \shop{} and HDDL domains
diverged.  All the \emph{original} plans, that is, the plans to repair when execution-time deviations occur,
were generated by \shop{}.

Deviations were modeled similarly to
actions, with preconditions.  Deviation preconditions and effects were defined
in ways that aimed to avoid making repair problems unsolvable, but we were unable to make them completely safe.
Non-trivial plan disturbances were difficult to model without
rendering problems unsolvable because the limited
expressive power of PDDL (and by extension HDDL) forced ramifications
to be ``compiled into'' action effects  (\eg, counting the number of
open stacks in the Openstacks domain), introducing dependencies that
often could not be undone (\eg failing the ``send'' operation in
Openstacks had to also restore the relevant order to ``waiting'').

\paragraph{Rovers}
The Rovers domain is taken from the third IPC in 2002.  Long \& Fox
say it is ``motivated by the 2003 Mars Exploration Rover (MER)
missions and the planned 2009 Mars Science Laboratory (MSL)
mission. The objective is to use a collection of mobile rovers to
traverse between waypoints on the planet, carrying out a variety of
data-collection missions and transmitting data back to a lander. The
problem includes constraints on the visibility of the lander from
various locations and on the ability of individual rovers to traverse
between particular pairs of
waypoints.'' \cite{long03:_inter_plann_compet}
Rovers problems scale in terms of size of the map, number of goals,
and the number of rovers.  Disturbances applied include losing
collected data; decalibration of cameras; and loss of visibility
between points on the map.
For the Rovers problem, the \shop{} domain uses a small set of
path-finding axioms to guide navigation between waypoints.
To avoid infinite loops in the navigation search space, \ipshort{}
does not use lookahead in the waypoint map, but it does check for and
reject cycles in the state space though this check is sound but not complete.

\paragraph{Satellite}
The Satellite problem also premiered in 2002, and is described as
``inspired by the problem of scheduling satellite observations. The
problems involve satellites collecting and storing data using
different instruments to observe a selection of
targets.'' \cite{long03:_inter_plann_compet}  Disturbances used were
changes in direction of satellites, decalibration, and power loss.
Problems scale by number of instruments, satellites and image
acquisition goals.

\paragraph{Openstacks}

IPC 2006 introduced the Openstacks domains as a translation of a standard combinatorial optimization problem,
``minimum maximum open stacks,'' in which
a manufacturer needed to fill a number of orders, each for a combination of different products \cite{GereviniDeterministicplanningfifth2009}.
Problems scale by numbers of products, number of orders, and number of
products in an order.

For plan repair, deviations included removing previously-made products, and
shipping-operation failures. These were difficult to add to Openstacks without
introducing dead ends into the search space because of
consistency constraints on the state that are only implicit in the
operators.
Thus to make repair possible, we added a ``reset'' operation to reset an
order from ``started'' to ``waiting.''

The \shop{} domain for Openstacks included axioms for a cost
heuristic.
Note that this is a common difficulty in plan repair: typically
domains are not written in such a way that recovery is possible if a
plan encounters disturbances because limitations in expressive power
means that ramifications must be programmed into the operator
definitions.  Furthermore, since state constraints (\eg, graph
connectivity in the logistics domain) are not and cannot be captured
in PDDL, one can inadvertently make dramatic changes to problem
structure by introducing disturbances; cf. Hoffmann~\shortcite{hoffmann:jair-11} on problem
topology.

\section{Results}
\label{sec:results}

\paragraph{Satellite}

The Satellite domain was the easiest for all three repair
methods.  Both \ipshort{} and \sfshort{} solved all of
the repair problems in our data set, and found solutions
quickly. \rwshort{} solved the majority of the problems,
between 60\% and 85\% of them
(Figure~\ref{fig:satellite-rewrite-success}).
Inspection shows that it correctly solved all the problems
that did not need repair (\ie, it never timed out re-deriving a plan).
Figure~\ref{fig:satellite-runtimes} gives the runtimes for all three methods,
using $\log_{10}$ because of the wide range of values.
\ipshort{} runtimes are slightly better than \sfshort{} on
average.  The times for \rwshort{} are almost uniformly worse, and scale
worse as the problem size grows. The results for \rwshort{} are not
surprising, since proving unsolvability may take longer.  Indeed,
plotting the runtimes for success and failure separately, demonstrates
that (Figure~\ref{fig:satellite-rewrite}).  Interestingly, there are
no failures due to timeout: \rwshort{} is able to prove unsolvable all
of the unrepairable problems).
While the \ipshort{} times are generally the best on average, its
times vary more widely: see Figure~\ref{fig:satellite-variances}.
Standard deviation of run times is greater for \ipshort{} than
\sfshort{} for all except problem 20.

\paragraph{Rovers}

The Rovers repair problems were more difficult than Satellite, and
none of the repair methods solved all of them.
\ifthenelse{\boolean{supplements}}{%
Figures~\ref{fig:rover-success} and
\ref{fig:rover-success-side-by-side} show}{%
Figure~\ref{fig:rover-success} shows}
success percentages.  Note the
outliers for \ipshort{} and \sfshort{} in problems 3 and 6 (which
are also difficult for \rwshort{}) and for \ipshort{} in
problems 10 and 20.
Generally, \sfshort{} is more successful than \ipshort{}, as shown
in
Table~\ref{tab:rover-repair-pct}\when{supplements}{ and
Figure~\ref{fig:rover-success-side-by-side}}.
\when{supplements}{For full details see Table~\ref{tab:rover-full-success}.}

Generally, while \rwshort{} was less successful than the
other algorithms, it tracks their results except for
problems 14 and 15, where the other two are uniformly successful, but
\rwshort{} is only 86\% and 92\% successful.

Runtimes are graphed in 
\ifthenelse{\boolean{supplements}}{Figures~\ref{fig:rover-runtimes-success},
\ref{fig:rover-runtimes-success-side-by-side},
\ref{fig:rover-runtimes-fail}, and
\ref{fig:rover-runtimes-fail-scatterplot}.}%
{Figures~\ref{fig:rover-runtimes-success}, and 
\ref{fig:rover-runtimes-fail}.}
We plot the successful and failed runs
separately, because the failed runs include both cases where an
algorithm proves that the problem is unsolvable and cases where it
simply runs out of time (time limit was set at 300s). 
\when{supplements}{The scatterplot (Supplementary Figure~\ref{fig:rover-runtimes-fail-scatterplot}) has
the advantage of showing the number of actual failures and
distinguishing visibly between the cases proved unsolvable
and those due to timeouts.}

Again,
\ipshort{} is generally faster, but \sfshort{} scales better with
problem difficulty.  For \ipshort, problem 3 is an
outlier in elapsed time due to failures, \sfshort{} has issues with problem
5, and all three have difficulties with 6, because of failures.
For \rwshort{} the larger problems are quite difficult.
As before, on the successful problems, \ipshort{} has a higher
runtime variance than \sfshort{}: see
Figure~\ref{fig:rover-runtimes-variance} in supplemental for details. 
For all 20 problems the standard
deviation of the runtime is greater for \ipshort{} than \sfshort{}.

\begin{figure}[h]
  \centering
  \includegraphics[width=\columnwidth]{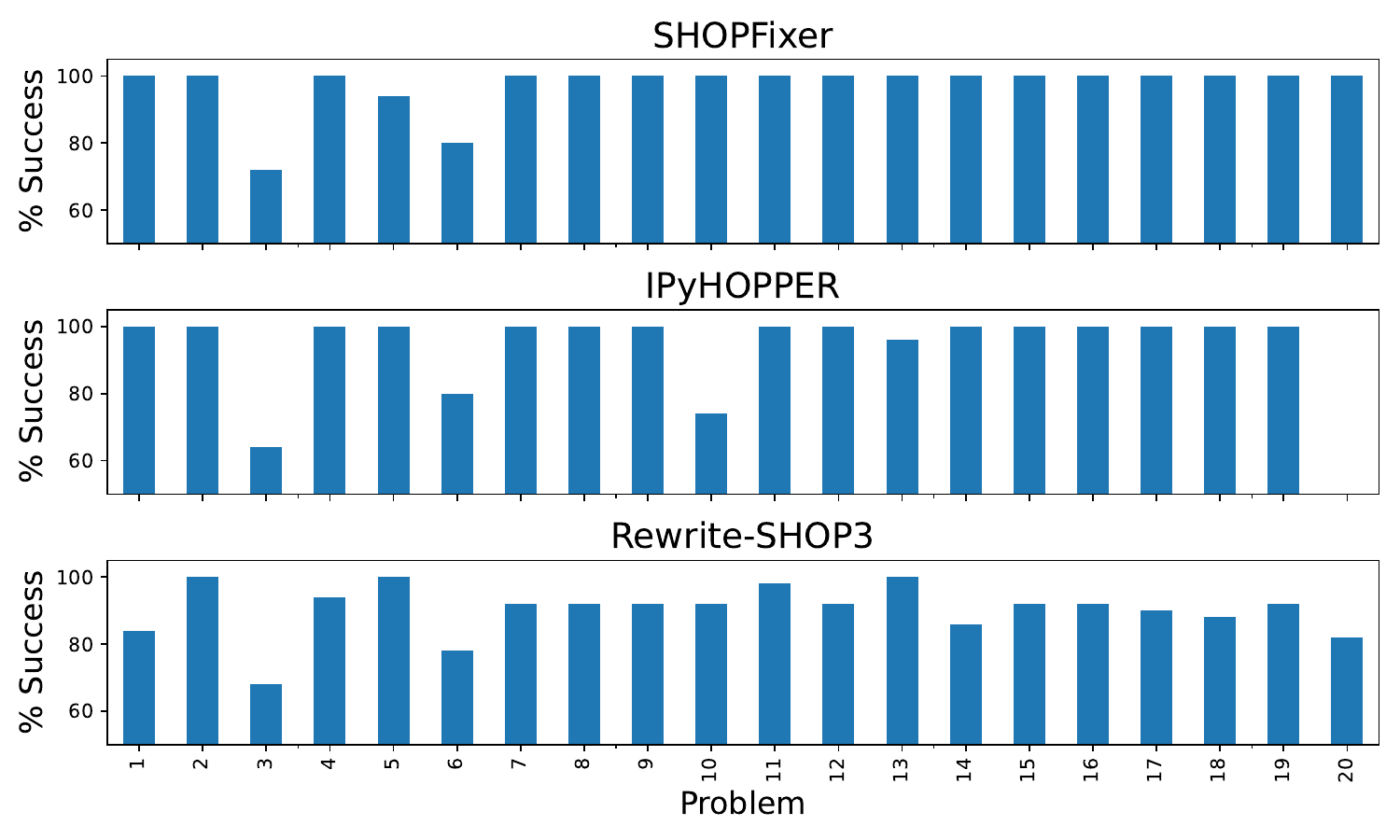}
  \caption{Success rates for the Rovers repair problems for each of
    the three algorithms.}
  \label{fig:rover-success}
\end{figure}

\begin{figure}[h]
  \centering
  \includegraphics[width=\columnwidth]{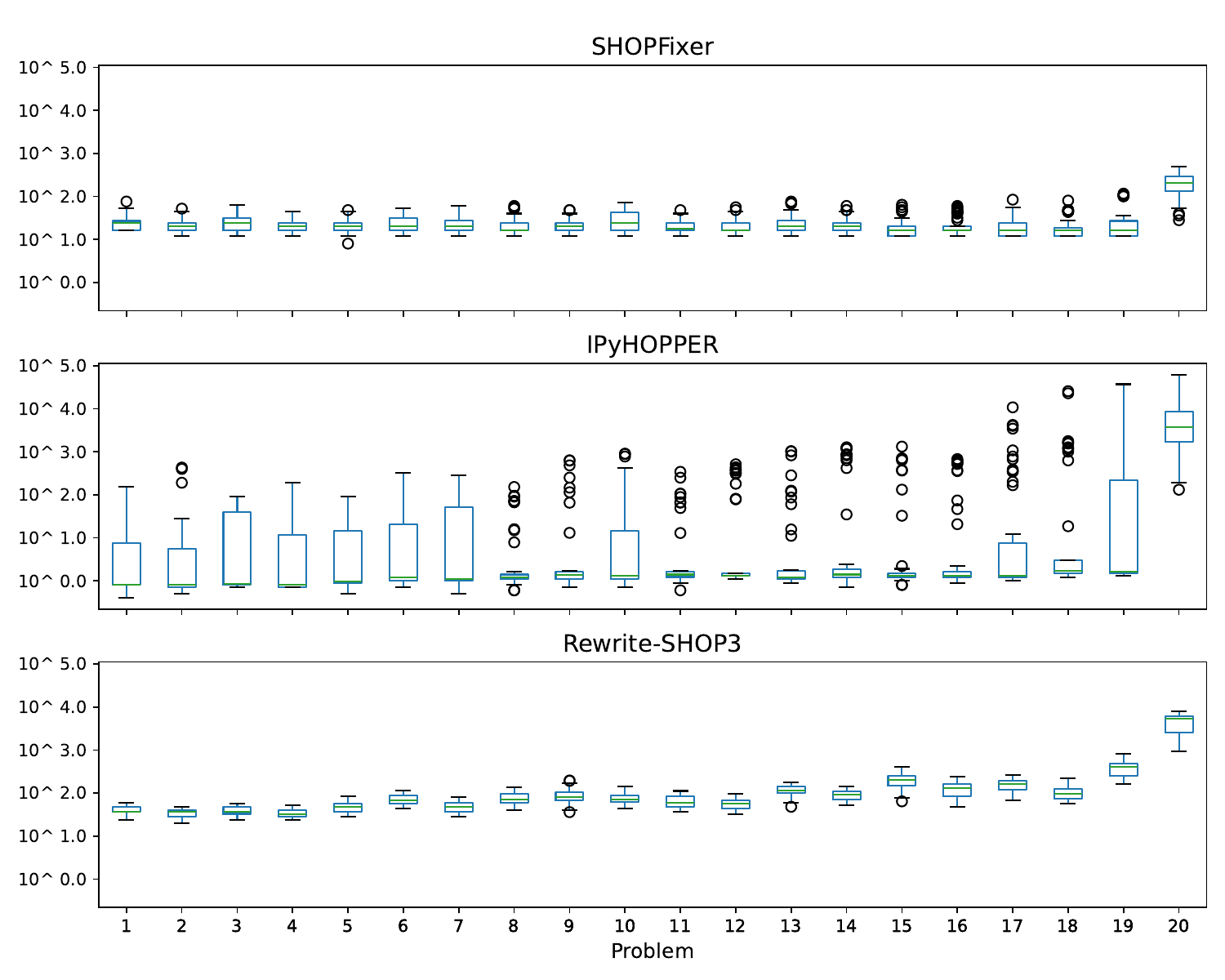}
  \caption{Each algorithm's runtimes in $\mathrm{msec}$ (semi-log plot) on the Rovers repair problems that the algorithm solved
    successfully. 
}
  \label{fig:rover-runtimes-success}
\end{figure}

\begin{figure}[h]
  \centering
  \includegraphics[width=\columnwidth]{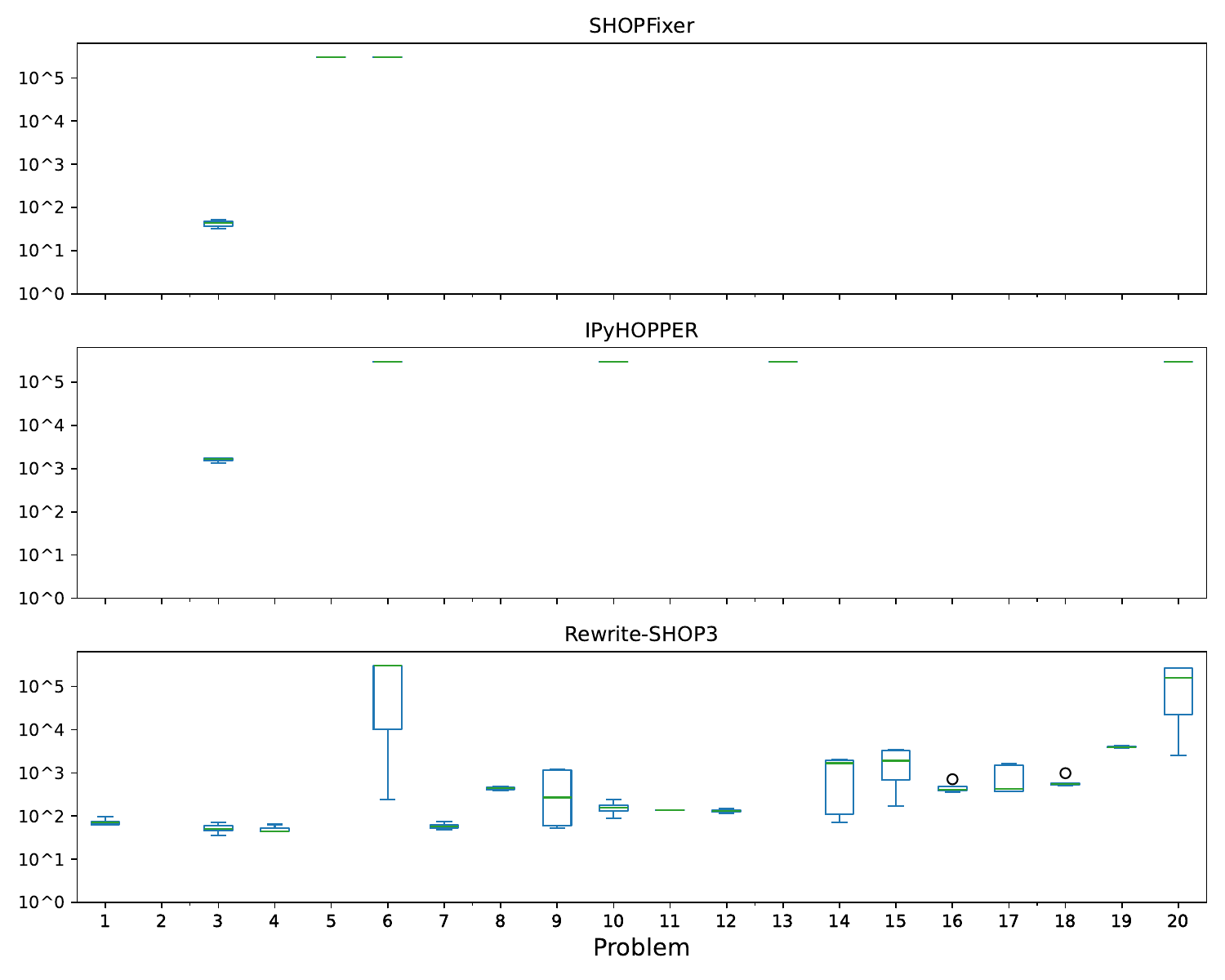}
  \caption{Each algorithm's runtimes in $\mathrm{msec}$ (semi-log plot) on the Rovers repair problems that the algorithm did \emph{not} solve
    successfully.
    }
  \label{fig:rover-runtimes-fail}
\end{figure}

\begin{table}[h]
\centering
  
  \begin{tabular}{c|rr}
  \small
Problem  &  \ipshort \% success &  \sfshort \% success\\ \hline 
~~3 & 64\%\quad{} & \textbf{72\%\quad{}} \\
~~5 & \textbf{100\%\quad{}} & 94\%\quad{} \\
~~6 & \textbf{80\%\quad{}} & \textbf{80\%\quad{}} \\
10 & 74\%\quad{} & \textbf{100\%\quad{}} \\
13 & 96\%\quad{} & \textbf{100\%\quad{}} \\
20 & 36\%\quad{} & \textbf{100\%\quad{}} \\
\end{tabular}

\caption{Success rates for Rover problems where either \ipshort or \sfshort{} did not
    solve all repair problems.  Problems not listed were all solved by both
    repair methods.
}  \label{tab:rover-repair-pct}
\end{table}

\paragraph{Openstacks}

For Openstacks, both \sfshort{} and \rwshort{} solved
all the repair problems.  \ipshort{} solved \emph{almost}
all, but failed for a small number (see
Table~\ref{tab:openstacks-ipyhopper-failures}).  
The runtimes, graphed in
\IfThenElse{supplements}{%
Figures~\ref{fig:openstacks-runtimes} and
Supplementary Figure~\ref{fig:openstacks-runtimes-side-by-side}}{%
Figure~\ref{fig:openstacks-runtimes}} clearly show that \rwshort{} and
\ipshort{} do not scale well on the more difficult problems, with
\rwshort{} notably worse.  \ipshort{} runtimes for problems 3 and 6
are outliers: they are much more difficult than for this solver than
for the others.  Full details are given in Table \ref{tab:openstacks-full-success-table}.

\begin{table}[h]
\centering
\begin{tabular}{c|c}
\small
Problem  &  \% success\\ \hline
~~7 & 98\% \\
10 & 98\% \\
14 & 98\% \\
19 & 98\% \\
25 & 94\% \\
\end{tabular}\quad
\begin{tabular}{c|c}
Problem  &  \% success\\ \hline
27 & 96\% \\
28 & 72\% \\
29 & 78\% \\
30 & 78\% \\
\mbox{}\\
\end{tabular}

\caption{Success rates for Openstacks problems where \ipyhopper{} did not solve all of
    the problems.}
  \label{tab:openstacks-ipyhopper-failures}
\end{table}

\section{Discussion}
\label{sec:discussion}

\paragraph{Common Features}

Across all of the domains, \rwshort{} is less time-efficient than the
other two repair methods.  This is due to the fact that it replans
\emph{ab initio}, albeit against a new problem that forces the plan to
replicate already-executed actions.  This involves an extensive amount
of rework, as can be seen very clearly in the Openstacks problems,
which have the highest runtimes for generating initial
plans\when{supplements}{ (see Supplementary Figure~\ref{fig:orig-runtimes})}.
This could likely be substantially improved
by heuristic guidance to direct the early part of planning
towards methods that replicate actions previously seen and that avoid
infeasibly introducing new actions.

\paragraph{Satellite}
It is unsurprising that \rwshort{} cannot solve some
problems that the other two algorithms solve.  Many of the
repairs for satellite simply involve immediately restoring a condition deleted by
a disturbance---and if it was a condition that the plan
established, a re-establishment typically will not work.  Perhaps the surprise
is that \emph{any} of these scenarios are successfully repaired by
\rwshort{}.  We investigated further, and found that \rwshort{} could handle all of the cases that did not require repair
(where the disturbance did not defeat any action preconditions).
Removing those cases gives the success rates shown in
Figure~\ref{fig:satellite-rewrite-success}.

Here is an example of how \rwshort's definition of repair makes it unable to solve a problem
handled by the other two systems.  In this repair problem
\when{supplements}{ (see
Table~\ref{tab:satellite-unsolvable})}, \verb|instrument0| becomes
decalibrated after it has been calibrated and pointed at its
observation target (\verb|phenomenon6|).  The way the domain is
written, calibration occurs only in a sequence of
calibration then observation. Thus there is no
plan in which two calibration operations are not separated
by an observation, so repair by rewrite is impossible.  The other two
methods simply treat the preparation task as having failed, and
re-execute the calibration, because their repair definition is more
permissive.

For this domain, \ipshort{} is generally faster than \sfshort{},
although it is implemented in interpreted Python rather than compiled
Common Lisp. This is probably accounted for by the fact that
\sfshort{} invests in building complex plan trees that include
dependency information in order to more rapidly identify the location
For these simple
problems, \sfshort{}'s added effort is generally not worthwhile.  We note that the
\emph{variance} of \ipshort{}'s runtimes is wider than that of
\sfshort{}, and that there are more outliers (Figure~\ref{fig:satellite-variances}).

\paragraph{Rovers}
There were several Rovers problems where even \ipshort{}
and \sfshort{} could not find solutions---but the three
algorithms behaved quite differently in these cases.  There were 99
Rovers problems that \rsthree{} could not repair.  Of these, only 2
were due to timeouts, both on problem 6, showing that
the algorithm usually could prove problems unrepairable.
As before, \sfshort's more permissive definition of ``repairable'' meant that
it solved more problems: it found only 35 unrepairable,
and of these only 3 were due to timeouts, as with \rsthree{} these
were both for problem 6.
\ipshort{} had much more difficulty with these problems.  It
failed to repair 97 cases, of which 78 were due to timeouts.
Timeouts are not a simple matter of scale: the greatest number of
timeouts (by a factor of 2) is for problem 20, but the runners-up are
3, 6, and 10, in declining order of number of timeouts.  Since the
problems are intended to scale from first to last, the
outcomes are not due only to raw scale.

Repair difficulties in the Rovers domain are due to the nature of the
disturbances in our model.  The ``obstruct-visibility'' disturbance
can render the waypoint graph no longer fully connected, in terms of
rover reachability.  Losing a sample may also give rise to an
unrepairable problem.

\paragraph{Openstacks}
The hardest of the domains, Openstacks shows the benefit of
\sfshort{}'s more expensive tree representation in runtime.  We can
see this even more clearly if we plot the two algorithms directly
against each other (Figure~\ref{fig:openstacks-comparison}).  Indeed,
\ipshort{}'s failures in this domain are all due to timeouts.
Specific problems are indicated in
Table~\ref{tab:openstacks-ipyhopper-failures}.  The more difficult
search space here heavily penalizes \ipshort{}'s simple
chronological backtracking.

In a reversal of the previous patterns, \rwshort{} solves all of the
problems.  This is due to a difference in the way the domain was
formalized compared to the other two domains. Recall that we had to
add a new action to prevent disturbances from making the Openstacks
problems unsolvable.  That modification had the effect of also helping
\rwshort{} as did the fact that action choice is primarily constrained
by preconditions, rather than by method structure, which also avoided
issues with this algorithm.  Note that this relatively unconstrained
planning also made it more difficult to generate the initial plans for
this domain.

\section{Conclusions and Future Directions}
\label{sec:conclusions}

We have presented an analysis of three recent hierarchical repair algorithms from the AI planning literature; namely, \sfshort, \ipshort, and \rwshort. Qualitatively, our analyses highlighted significant differences among these methods:
First, \rwshort's deﬁnition of plan repair is more stringent than the others in practice;
we have identiﬁed benchmark planning problems that are solvable for \ipshort\ and/or \sfshort\ that cannot be solved by \rwshort. Secondly, \sfshort\ attempts to detect when a plan will be invalid, before any actions actually fail; i.e., \sfshort\ invests in both data structures
and computation in order to detect compromise to a plan as
soon as possible. \ipshort, on the other hand, is not a
model-based projective planner in the same sense: it relies
on an external simulation to do projection for it, instead of
having an internal action model as most planners do. This
difference in planning approaches leads to different plan repair behaviors.

Our results on the efficiency of the \rwshort{} algorithm should be
taken with a large grain of salt. The original developers of this
algorithm point out that their characterization is intended to be
conceptually correct and clean, and that they have not yet taken into
account the efficiency of the formulation. In addition to tuning the
formulation, its efficiency could be improved by improved heuristics
for planner when they run against rewrite problems. In particular, a
planner searching the decomposition tree top-down should take into
account the position of its leftmost child when deciding whether to
choose the original methods, or methods whose leaves are taken from
the executed prefix of the plan.

Our experiences also highlight unresolved issues in applying
\rwshort{} in grounded planning systems.  How best to schedule re-grounding \textit{vis-\`{a}-vis} generation of the rewritten repair
problem remains to be determined. While there were some subtleties
to resolve in developing our lifted implementation, it did not have this chicken-and-egg problem.

Another interesting research direction is studying how HTN domain
engineering  affects the tradeoffs between
efficiency and flexibility. 
At present, repair problems are generally created by modifying
previously-existing planning problems (including IPC problems), which
are not designed for execution, let alone to be repairable.
In connection with the general concept of
stability, this may yield a new insights in search-control for plan
repair and for repairable plans; deriving properties from how
preconditions and effects enable planning heuristics and repairability as
well as how those preconditions and the task structure enable search
control at higher levels of the plan trees. Similar to {\em
  refineability} properties
\cite{bacchus1992expected,yang1997generating}, this approach can be
examined formally, theoretically, and experimentally.

\section*{Acknowledgements}
\ack

\clearpage{}

\bibliography{main,rpg-zotero}
\when{supplements}{%

\appendix
\onecolumn{}
\renewcommand\thefigure{S\arabic{figure}}    
\renewcommand\thetable{S\arabic{table}} 
\setcounter{figure}{0} 
\setcounter{table}{0} 

\makeatletter
\def\@seccntformat#1{%
  \expandafter\ifx\csname c@#1\endcsname\c@section\else
  \csname the#1\endcsname\quad
  \fi}
\makeatother

\section{Supplementary Material}

\begin{figure}[h]
  \centering
  \includegraphics[width=0.9\columnwidth]{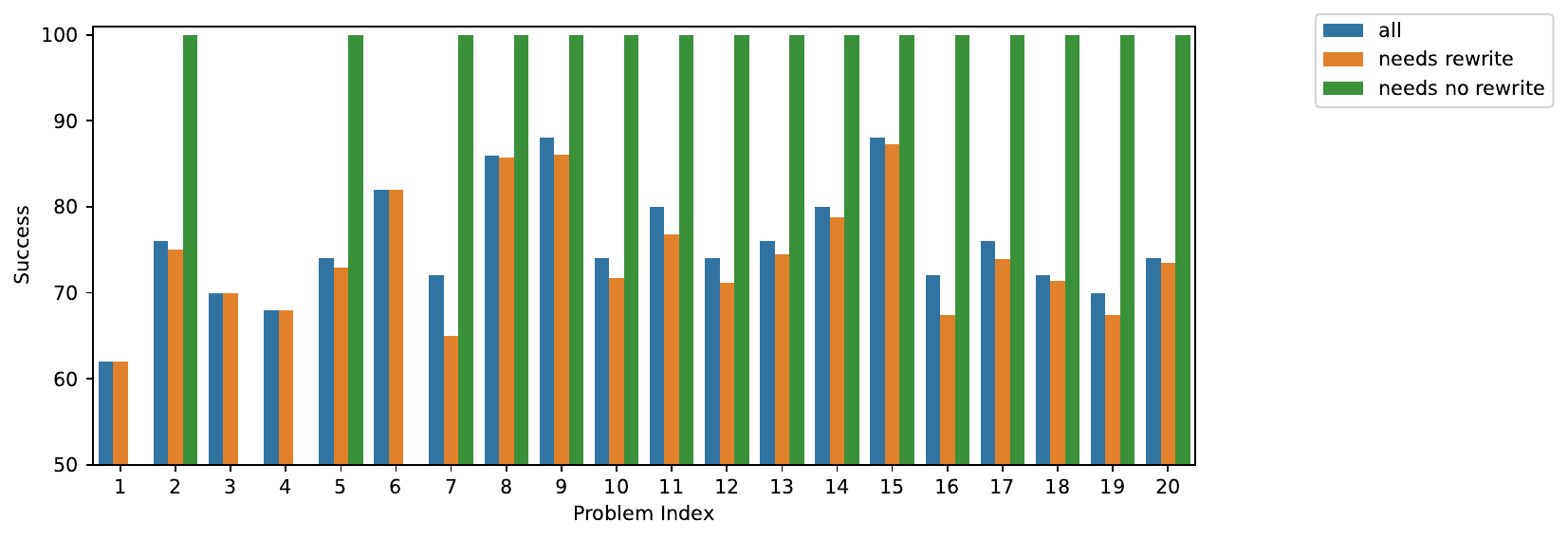}
  \caption{\rsthree{} success rates for the satellite repair
    problems (note the $y$ axis runs only from 50-100\%).}
  \label{fig:satellite-rewrite-success}
\end{figure}

\begin{figure}[h]
  \centering
  \includegraphics[width=0.9\columnwidth]{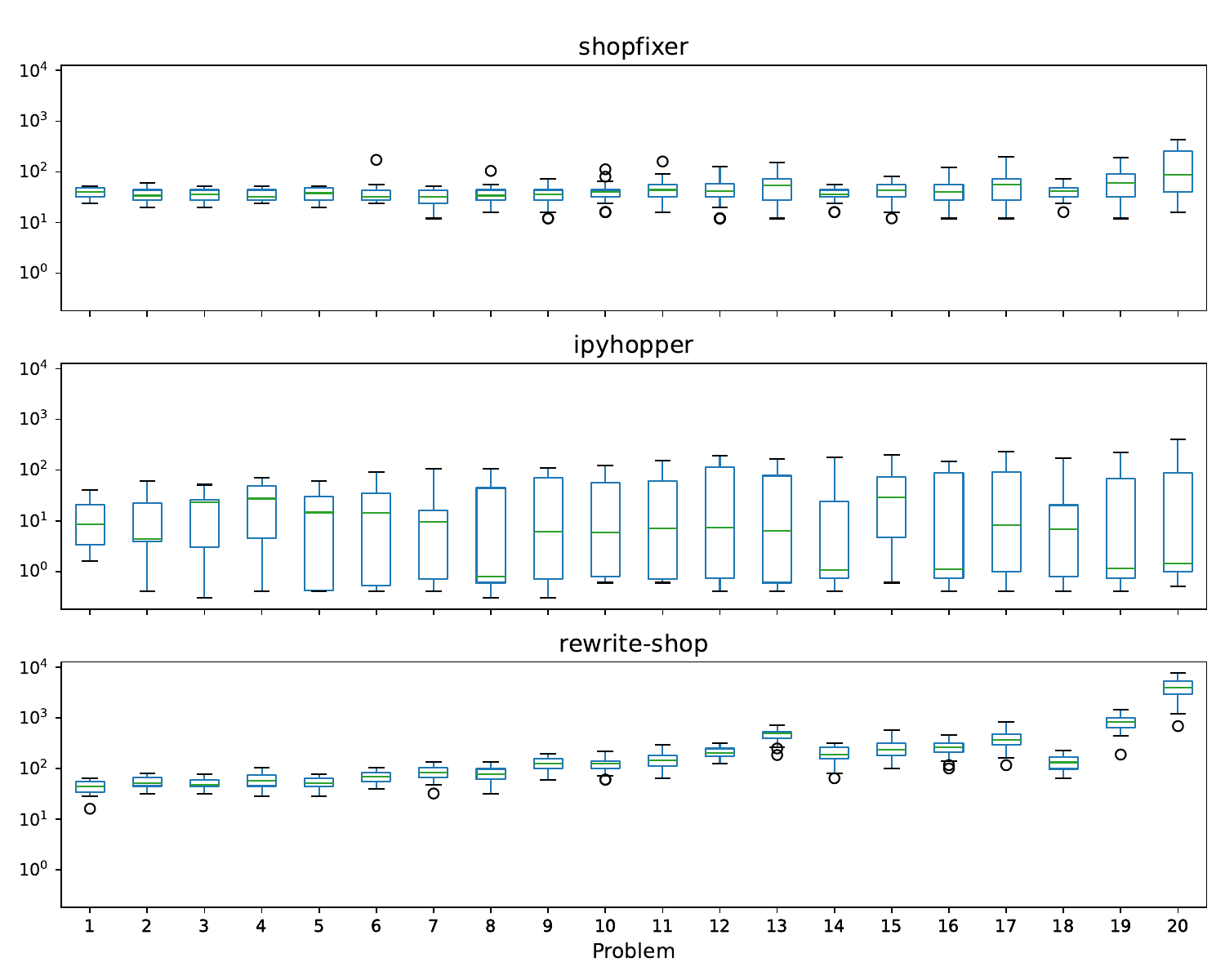}
  \caption{Satellite problem runtimes for all repair algorithms in $\mathrm{msec}$ (semi-log plot).}
  \label{fig:satellite-runtimes}
\end{figure}
\begin{figure}[h]
  \centering
  \includegraphics[width=0.9\columnwidth]{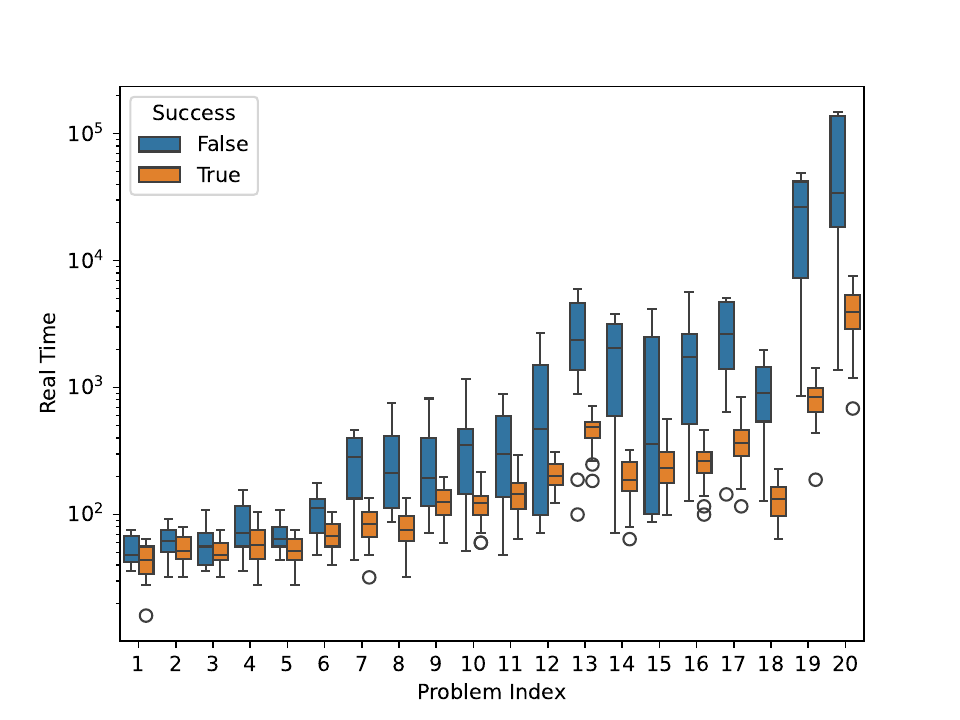}
  \caption{Satellite problem runtimes for \rsthree{}, comparing
    successful trials versus failed trials, plotted in $\log_{10}(\mathrm{msec})$.}
  \label{fig:satellite-rewrite}
\end{figure}

\begin{figure}[h]
  \centering
  \includegraphics[width=0.9\columnwidth]{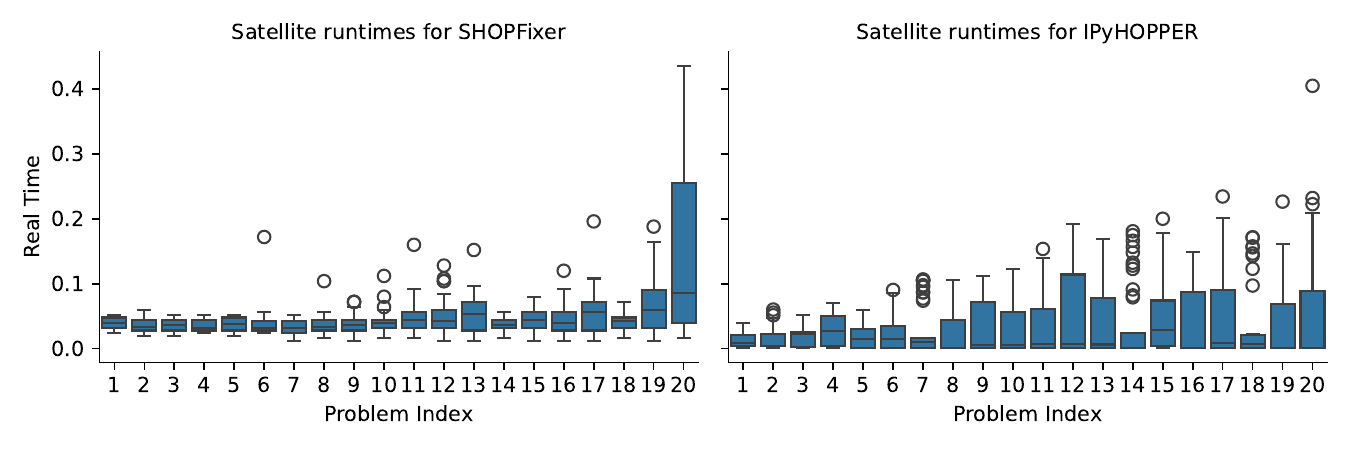}
  \caption{Satellite problem runtimes vary more greatly for
    \ipyhopper{} than \shopfixer{}. Problem 20 is a notable exception.}
  \label{fig:satellite-variances}
\end{figure}

\begin{figure}[h]
  \centering
  \includegraphics[width=0.9\columnwidth]{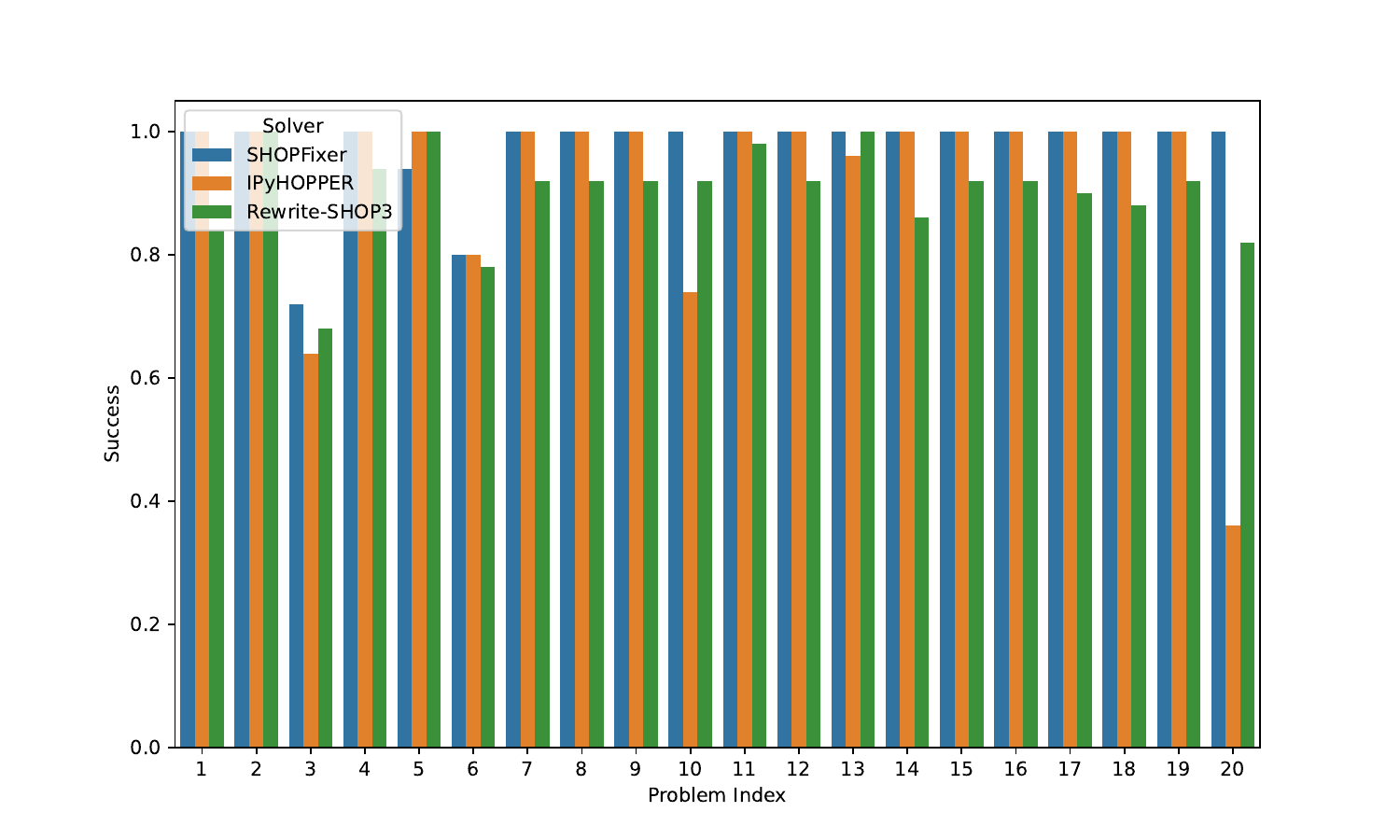}
  \caption{Success rates for the Rovers repair problems for each of
    the three algorithms, plotted side-by-side.}
  \label{fig:rover-success-side-by-side}
\end{figure}

\begin{figure}[h]
  \centering
  \includegraphics[width=0.9\columnwidth]{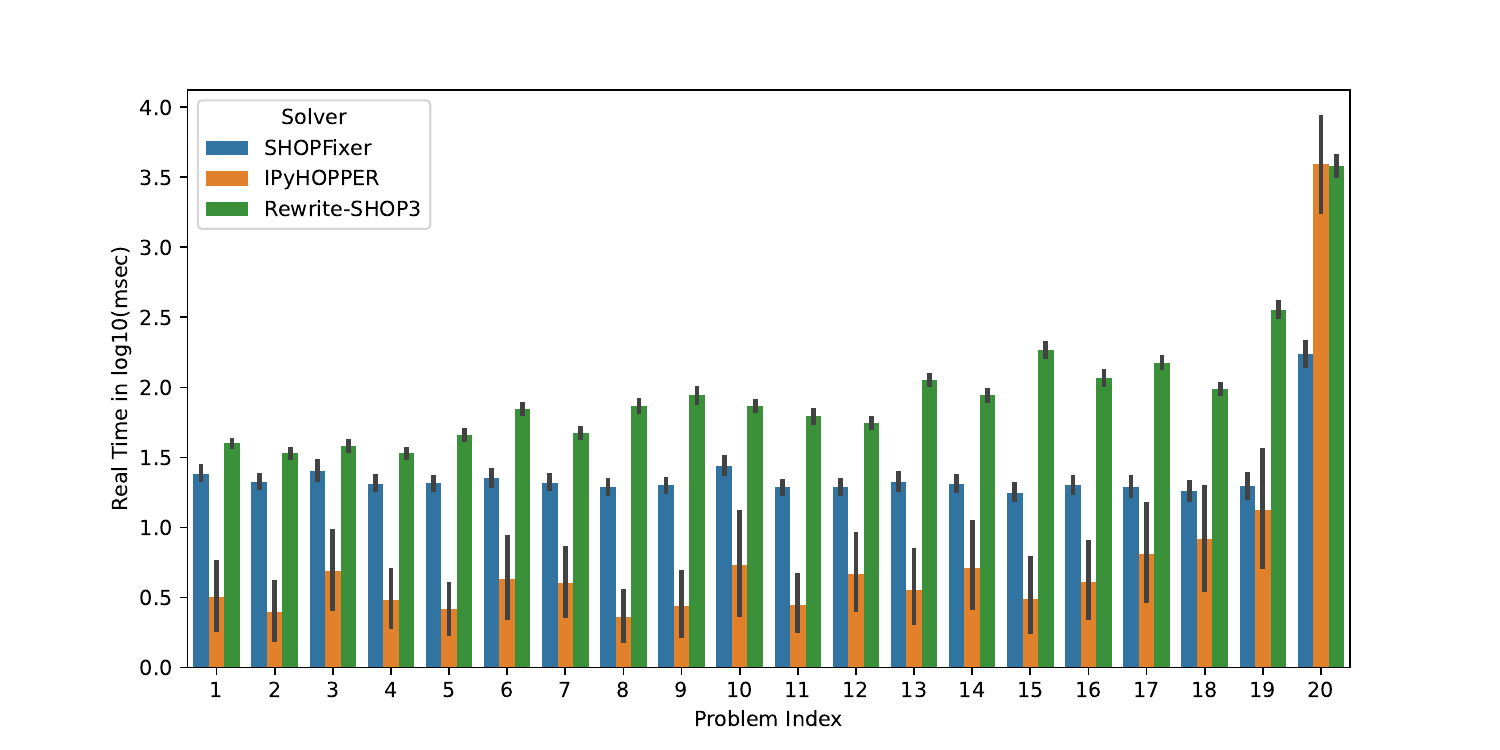}
  \caption{Runtimes for the Rovers repair problems in
    $\log_{10}(\mathrm{msec})$,
    plotted side-by-side for easier comparison.  These include only those problems solved
    successfully by the algorithm in question.}
  \label{fig:rover-runtimes-success-side-by-side}
\end{figure}

\begin{figure}[h]
  \centering
  \includegraphics[width=0.9\columnwidth]{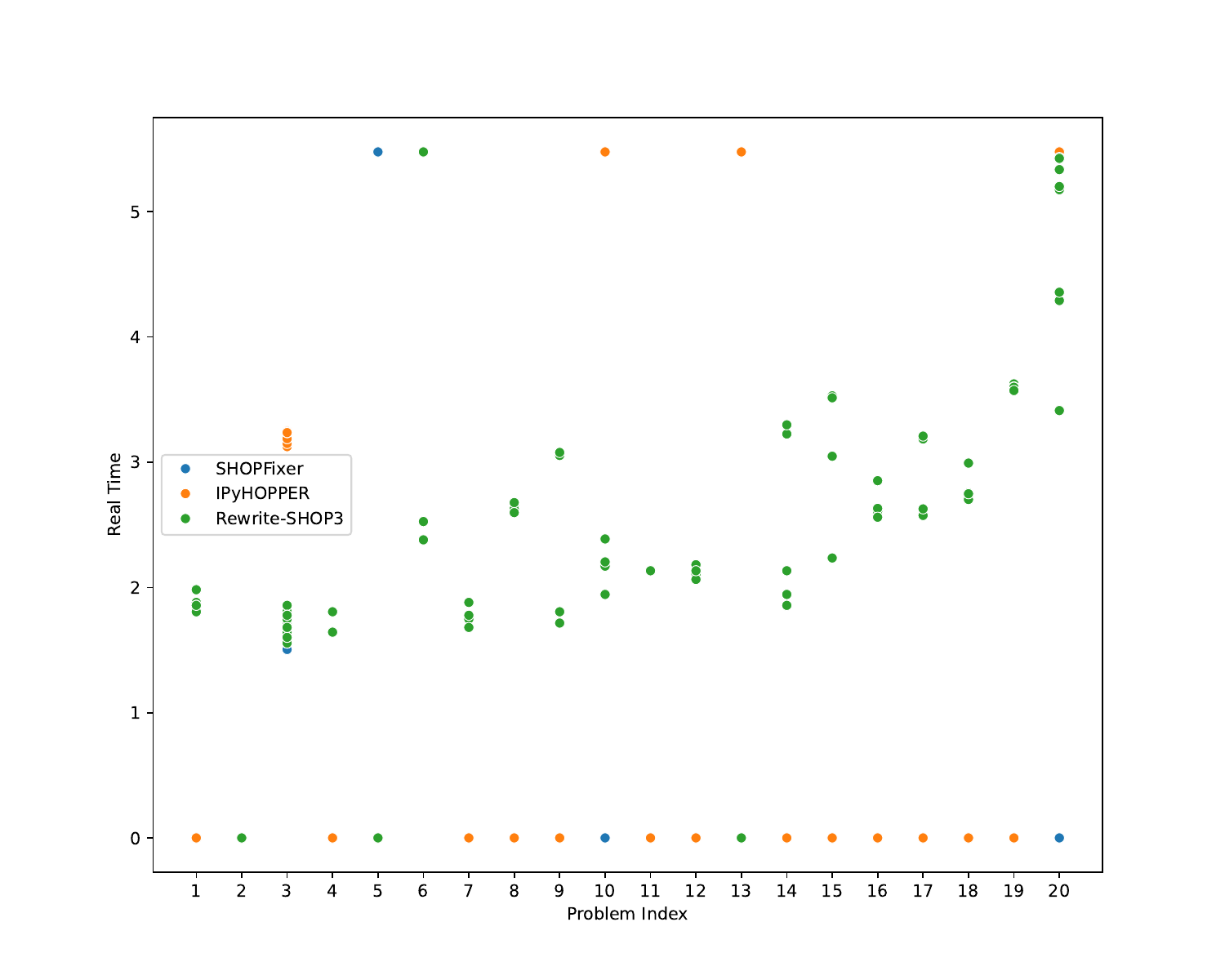}
  \caption{Runtimes for the Rovers repair problems in $\log_{10}(\mathrm{msec})$ for each algorithm.  These are only the problems \emph{not} solved
    successfully by the algorithm in question.}
  \label{fig:rover-runtimes-fail-scatterplot}
\end{figure}

\begin{figure}[h]
  \centering
  \includegraphics[width=0.9\columnwidth]{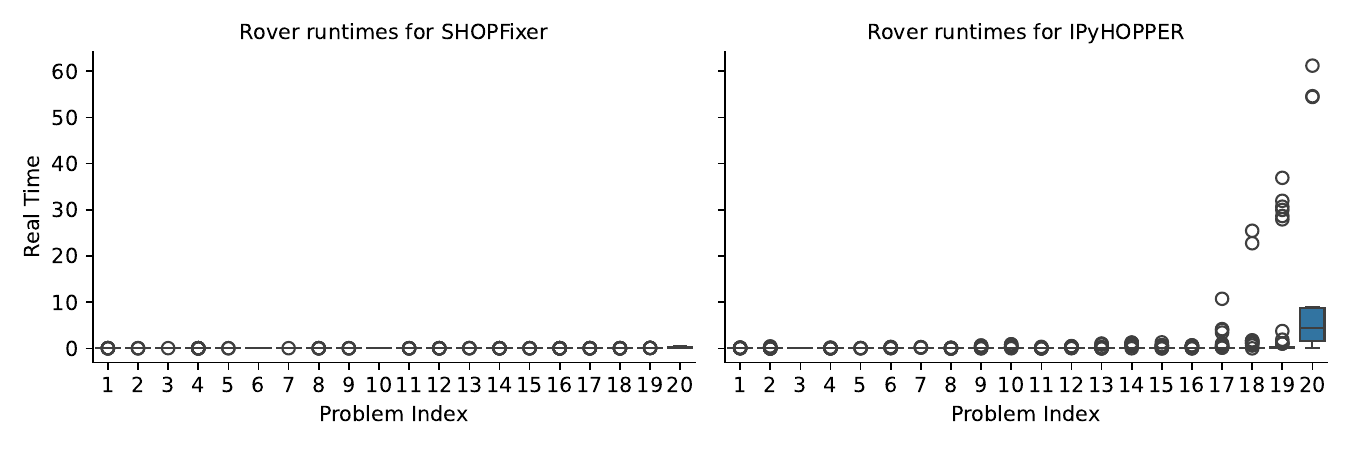}
  \caption{Variance in runtimes for the Rovers repair problems for
    \ipshort{} and \sfshort{}.   Only successfully solved problems
    are plotted.}
  \label{fig:rover-runtimes-variance}
\end{figure}

\begin{table}[h]
    \begin{adjustbox}{max width=0.98\textwidth}
    \begin{tabular}{l||rrr|rrr|rrr|rrr}
        & \multicolumn{3}{r}{Success Percentage} & \multicolumn{3}{r}{Number Solved} & \multicolumn{3}{r}{Number Failed} & \multicolumn{3}{r}{Proven Unsolvable} \\
       & IPH & RW & SF & IPH & RW & SF & IPH & RW & SF & IPH & RW & SF \\
       Problem &  &  &  &  &  &  &  &  &  &  &  &  \\ \hline \hline
       1 & 100 & 84 & 100 & 50 & 42 & 50 & 0 & 8 & 0 & 0 & 8 & 0 \\
       2 & 100 & 100 & 100 & 50 & 50 & 50 & 0 & 0 & 0 & 0 & 0 & 0 \\
       3 & 64 & 68 & 72 & 32 & 34 & 36 & 18 & 16 & 14 & 18 & 16 & 14 \\
       4 & 100 & 94 & 100 & 50 & 47 & 50 & 0 & 3 & 0 & 0 & 3 & 0 \\
       5 & 100 & 100 & 94 & 50 & 50 & 47 & 0 & 0 & 3 & 0 & 0 & 0 \\
       6 & 80 & 78 & 80 & 40 & 39 & 40 & 10 & 11 & 10 & 0 & 4 & 0 \\
       7 & 100 & 92 & 100 & 50 & 46 & 50 & 0 & 4 & 0 & 0 & 4 & 0 \\
       8 & 100 & 92 & 100 & 50 & 46 & 50 & 0 & 4 & 0 & 0 & 4 & 0 \\
       9 & 100 & 92 & 100 & 50 & 46 & 50 & 0 & 4 & 0 & 0 & 4 & 0 \\
       10 & 74 & 92 & 100 & 37 & 46 & 50 & 13 & 4 & 0 & 0 & 4 & 0 \\
       11 & 100 & 98 & 100 & 50 & 49 & 50 & 0 & 1 & 0 & 0 & 1 & 0 \\
       12 & 100 & 92 & 100 & 50 & 46 & 50 & 0 & 4 & 0 & 0 & 4 & 0 \\
       13 & 96 & 100 & 100 & 48 & 50 & 50 & 2 & 0 & 0 & 0 & 0 & 0 \\
       14 & 100 & 86 & 100 & 50 & 43 & 50 & 0 & 7 & 0 & 0 & 7 & 0 \\
       15 & 100 & 92 & 100 & 50 & 46 & 50 & 0 & 4 & 0 & 0 & 4 & 0 \\
       16 & 100 & 92 & 100 & 50 & 46 & 50 & 0 & 4 & 0 & 0 & 4 & 0 \\
       17 & 100 & 90 & 100 & 50 & 45 & 50 & 0 & 5 & 0 & 0 & 5 & 0 \\
       18 & 100 & 88 & 100 & 50 & 44 & 50 & 0 & 6 & 0 & 0 & 6 & 0 \\
       19 & 100 & 92 & 100 & 50 & 46 & 50 & 0 & 4 & 0 & 0 & 4 & 0 \\
       20 & 36 & 82 & 100 & 18 & 41 & 50 & 32 & 9 & 0 & 0 & 9 & 0 \\ \hline
       Total & 1850 & 1804 & 1946 & 925 & 902 & 973 & 75 & 98 & 27 & 18 & 91 & 14 \\ \hline
       \end{tabular}   
    \end{adjustbox}
    \caption{Full table of successes and failures for Rovers problems.
    Rows where all three methods succeeded are omitted.}
    \label{tab:rover-full-success}
\end{table}

\begin{figure}[h]
  \centering
  \includegraphics[width=0.9\columnwidth]{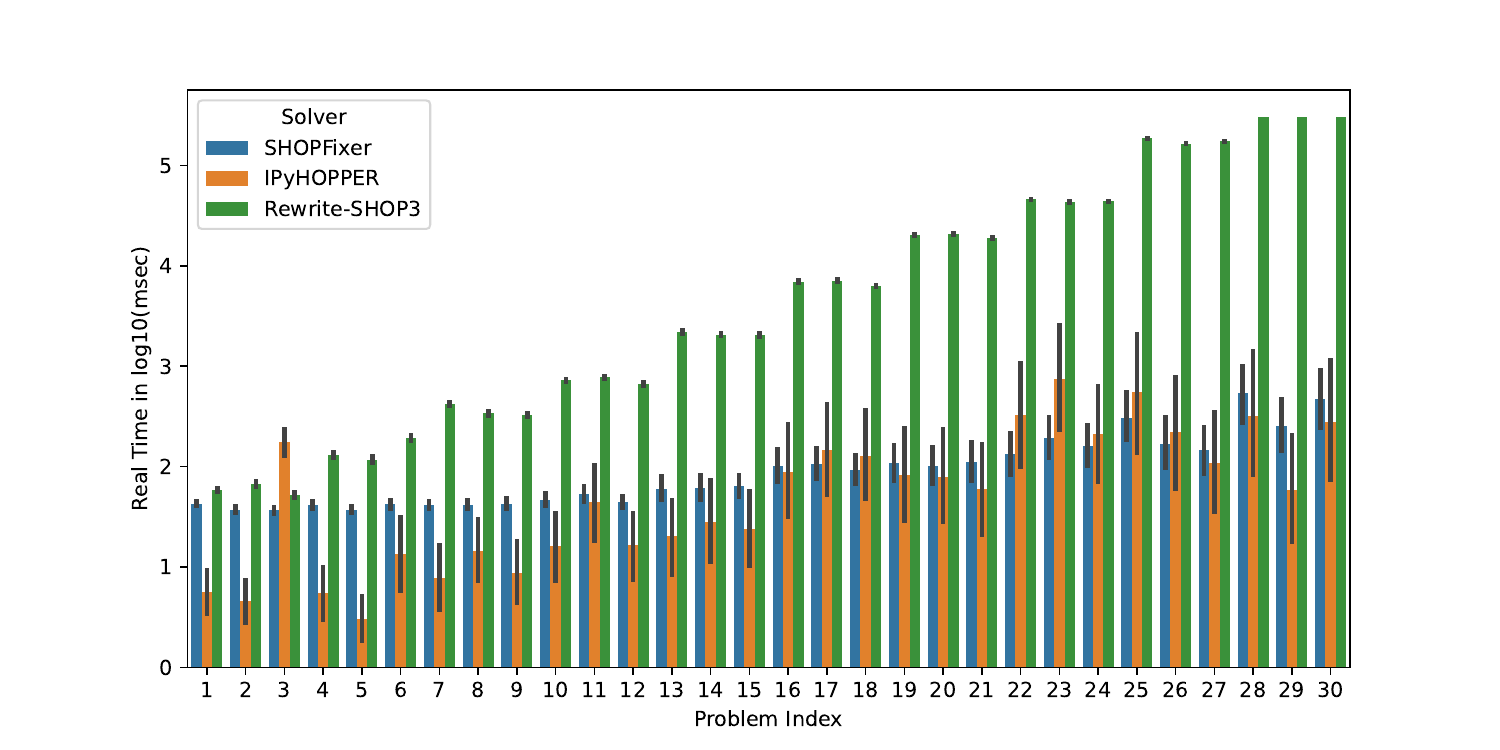}
  \caption{Runtimes for successfully solved Openstacks repair problems in $\log_{10}(\mathrm{msec})$ for each of
    the three algorithms, plotted side-by-side for easier comparison.}
  \label{fig:openstacks-runtimes-side-by-side}
\end{figure}

\begin{table*}[h]
  \begin{adjustbox}{max width=0.98\textwidth}
  \begin{tabular}{l||rrr|rrr|rrr|}
    & \multicolumn{3}{r|}{\textbf{Success Percentage}}
    & \multicolumn{3}{r|}{\textbf{Number Solved}}
    & \multicolumn{3}{r|}{\textbf{Number Failed}} \\ 
    Solver & IPyHOPPER & Rewrite-SHOP3 & SHOPFixer & IPyHOPPER &
                                                                 Rewrite-SHOP3
                                       & SHOPFixer & IPyHOPPER &
                                                                 Rewrite-SHOP3 & SHOPFixer \\ 
    Problem Index &  &  &  &  &  &  &  &  &  \\ \hline \hline
7 & 98 & 100 & 100 & 49 & 50 & 50 & 1 & 0 & 0 \\
10 & 98 & 100 & 100 & 49 & 50 & 50 & 1 & 0 & 0 \\
14 & 98 & 100 & 100 & 49 & 50 & 50 & 1 & 0 & 0 \\
19 & 98 & 100 & 100 & 49 & 50 & 50 & 1 & 0 & 0 \\
25 & 94 & 100 & 100 & 47 & 50 & 50 & 3 & 0 & 0 \\
27 & 96 & 100 & 100 & 48 & 50 & 50 & 2 & 0 & 0 \\
28 & 72 & 100 & 100 & 36 & 50 & 50 & 14 & 0 & 0 \\
29 & 78 & 100 & 100 & 39 & 50 & 50 & 11 & 0 & 0 \\
30 & 78 & 100 & 100 & 39 & 50 & 50 & 11 & 0 & 0 \\ \hline
Total &  & & & 1455 & 1500 & 1500 & 45 & 0 & 0 \\

\end{tabular}
\end{adjustbox}

  \caption{Full table of successes and failures for Openstacks problems.
    Rows where all three methods succeeded are omitted.}
  \label{tab:openstacks-full-success-table}
\end{table*}

\begin{table*}[h]
  \small
\begin{verbatim}
==>
1 (switch_on instrument0 satellite0)
2 (turn_to satellite0 groundstation2 phenomenon6)
3 (calibrate satellite0 instrument0 groundstation2)
4 (turn_to satellite0 phenomenon6 groundstation2) 
 <--- instrument0 becomes decalibrated
5 (take_image satellite0 phenomenon6 instrument0 thermograph0)
6 (turn_to satellite0 star5 phenomenon6)
7 (take_image satellite0 star5 instrument0 thermograph0)
8 (turn_to satellite0 phenomenon4 star5)
9 (take_image satellite0 phenomenon4 instrument0 thermograph0)
root 10
10 (main) -> take-one 11 12
11 (have-image phenomenon6 thermograph0) -> prepare-then-take 13 14
12 (main) -> take-one 15 16
13 (prepare-instrument satellite0 instrument0) -> prepare 17 18
14 (take-image satellite0 instrument0 phenomenon6 thermograph0) -> turn-then-take 4 5
15 (have-image star5 thermograph0) -> prepare-then-take 19 20
16 (main) -> take-one 21 22
17 (turn-on-instrument satellite0 instrument0) -> turn-on 1
18 (calibrate-instrument satellite0 instrument0) -> repoint-then-calibrate 2 3
19 (prepare-instrument satellite0 instrument0) -> prepare 23 24
20 (take-image satellite0 instrument0 star5 thermograph0) -> turn-then-take 6 7
21 (have-image phenomenon4 thermograph0) -> prepare-then-take 25 26
22 (main) -> all-done
23 (turn-on-instrument satellite0 instrument0) -> already-on
24 (calibrate-instrument satellite0 instrument0) -> no-calibration-needed
25 (prepare-instrument satellite0 instrument0) -> prepare 27 28
26 (take-image satellite0 instrument0 phenomenon4 thermograph0) -> turn-then-take 8 9
27 (turn-on-instrument satellite0 instrument0) -> already-on
28 (calibrate-instrument satellite0 instrument0) -> no-calibration-needed
<==

\end{verbatim}
  
  \caption{Original plan, with deviation, for Satellite problem that
    \rewrite{} cannot solve.}
  \label{tab:satellite-unsolvable}
\end{table*}

\begin{figure*}[h]
  \begin{tabular}{cc}

\hspace*{-1.7em}\includegraphics[width=0.56\textwidth]{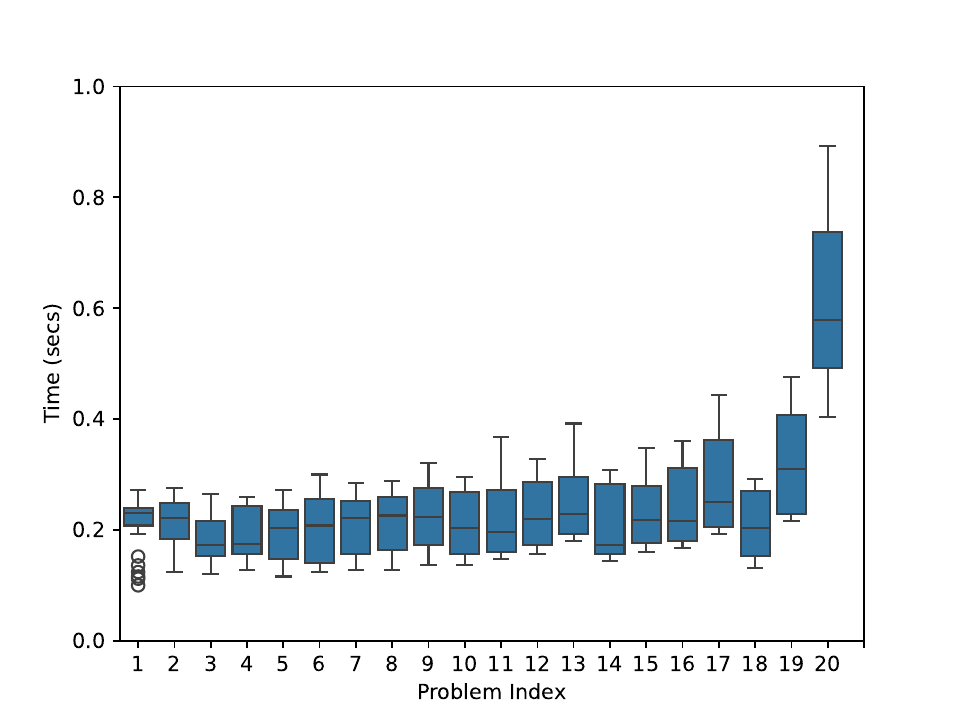}\hspace*{-1.7em}
    &
\hspace*{-1.7em}\includegraphics[width=0.56\textwidth]{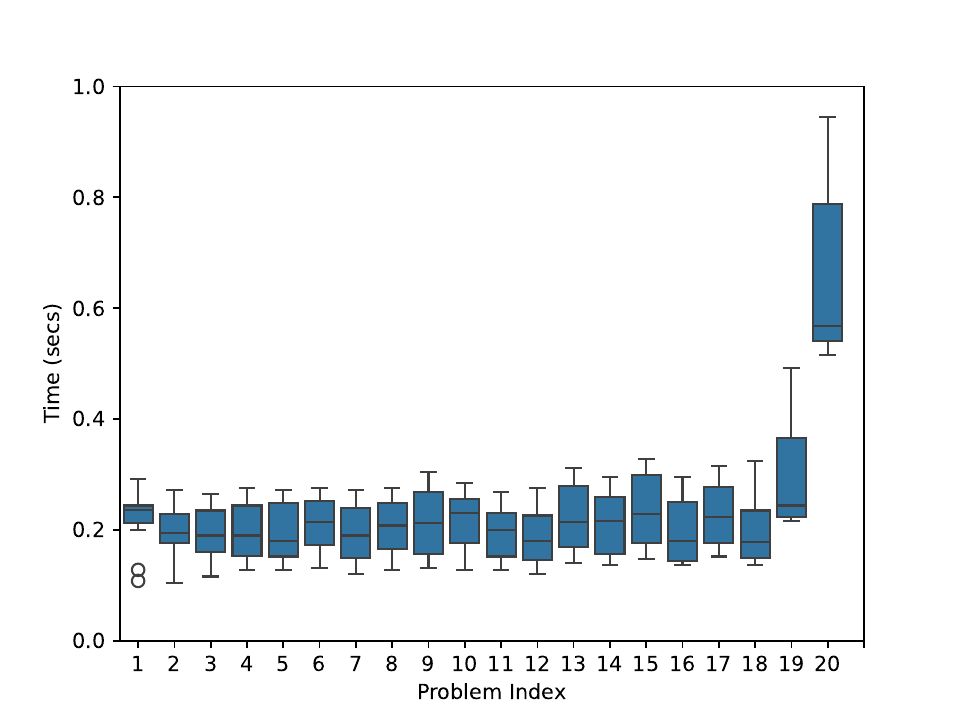}\hspace*{-1.7em}
  \\
Satellite & Rovers \\
    \multicolumn{2}{c}{\includegraphics[width=0.7\textwidth]{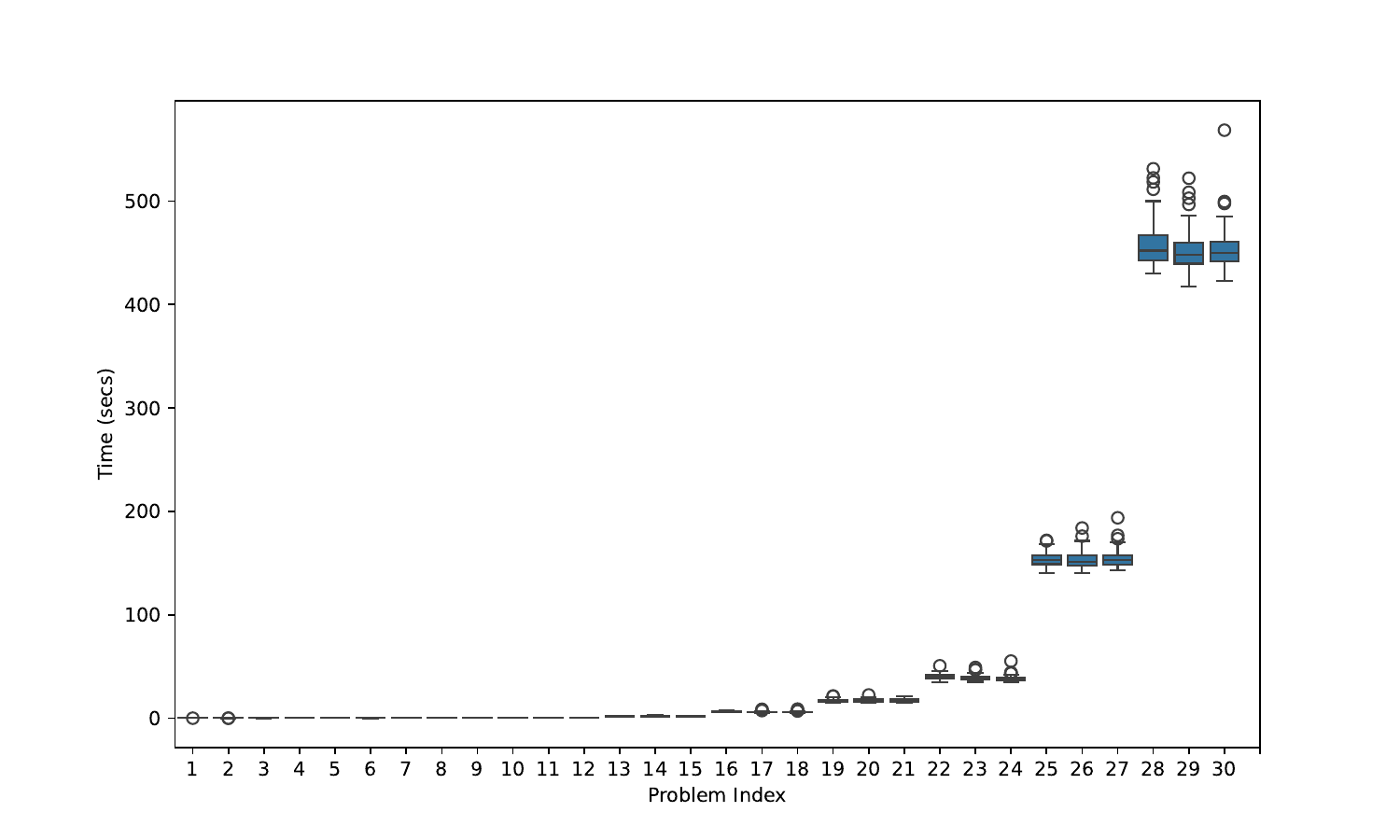}} \\
    \multicolumn{2}{c}{Openstacks}
  \end{tabular}
  \caption{Runtimes for generating the original (to be repaired) plans
    for the three domains, all using \shop{}. Note the difference in
    $y$ axis for Openstacks.}
  \label{fig:orig-runtimes}
\end{figure*}

\begin{figure}[h]
  \centering
  \includegraphics[width=0.9\columnwidth]{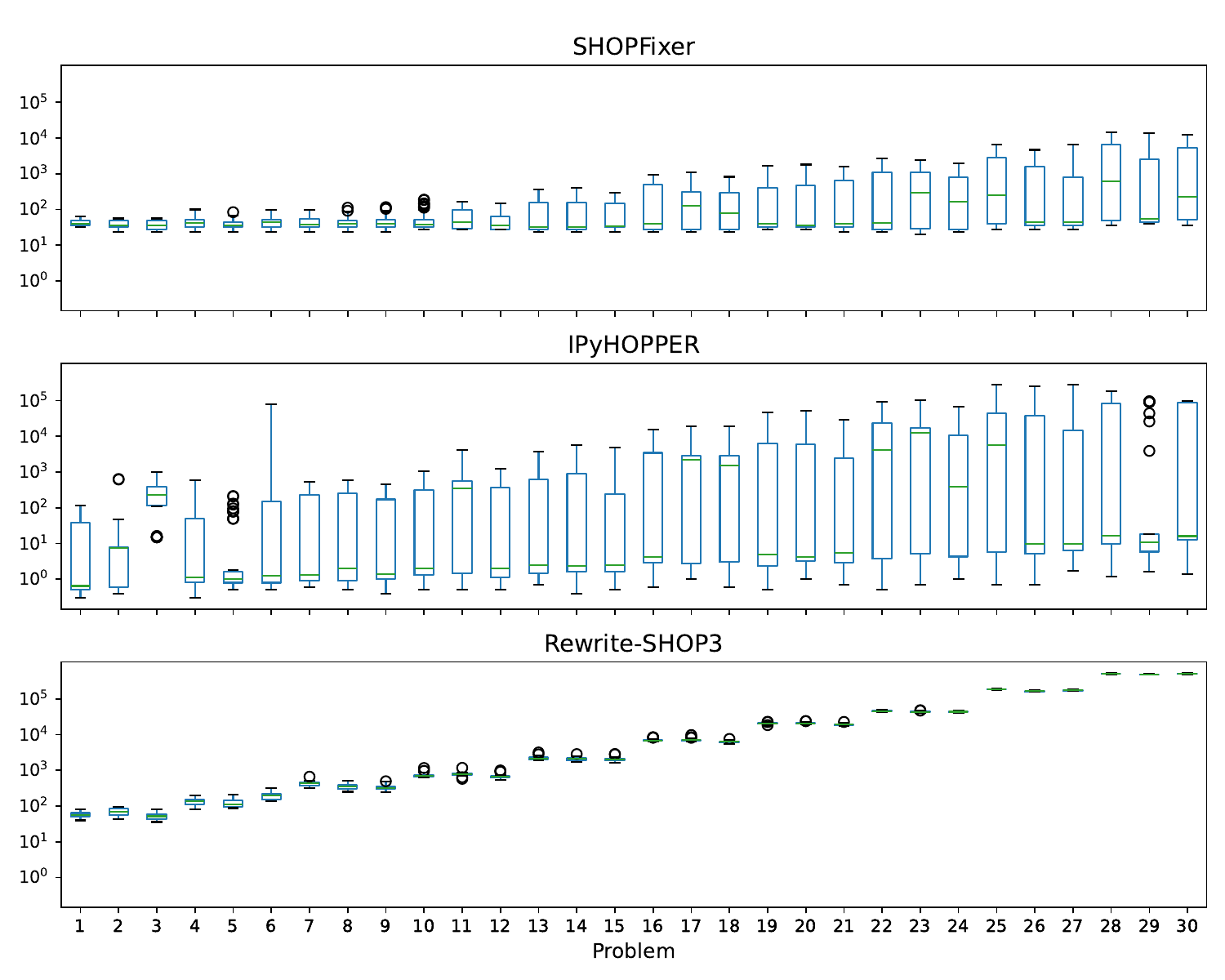}
  \caption{Runtimes for the Openstacks repair problems in $\mathrm{msec}$ (semi-log plot) for each algorithm.  These include only those problems solved
    successfully by the algorithm in question.}
  \label{fig:openstacks-runtimes}
\end{figure}

\begin{figure}[h]
  \centering
  \includegraphics[width=0.9\columnwidth]{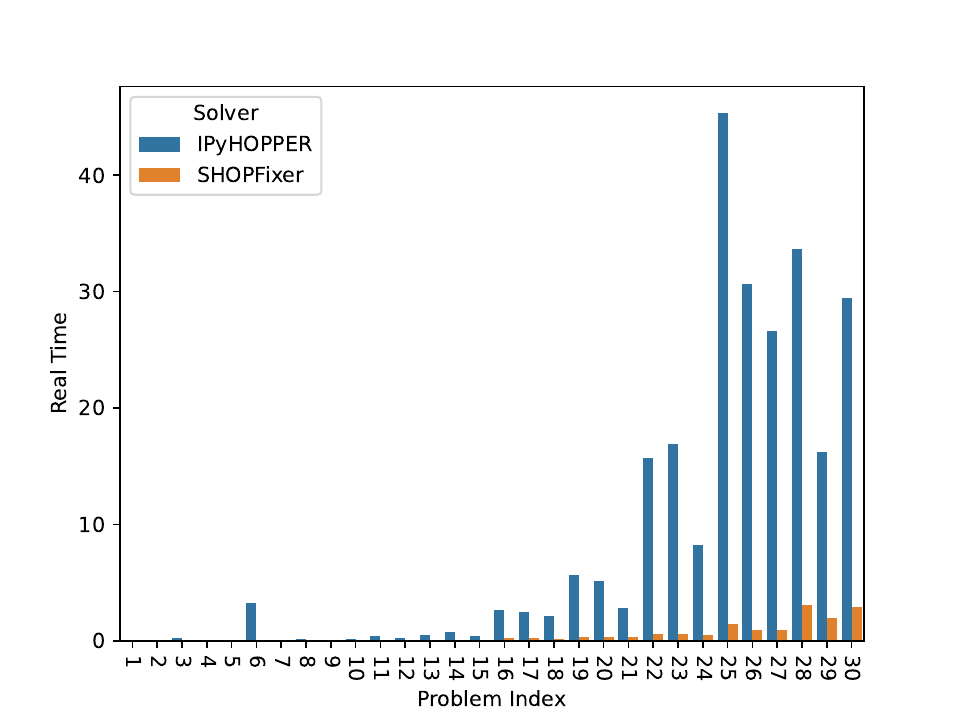}
  \caption{Direct comparison of runtimes for \ipyhopper{} and
    \shopfixer{} on Openstacks problems.  Note that these are
    plotted in seconds, and not on a log scale.}
  \label{fig:openstacks-comparison}
\end{figure}

\begin{algorithm2e}{
\caption{Nondeterministic \sfshort pseudocode}
\label{alg:nondeterministic_shopfixer}
\small
\SetKwFunction{SHOPFixer}{\sfshort}
\SetKwFunction{replace}{replace}
\SetKw{return}{return}
\SetKw{continue}{continue}
\SetKw{break}{break}
\SetKw{None}{None}
\SetKwProg{Fn}{Def}{:}{}

\Fn{\sfshort{ current execution state: $s_c$, unexecuted solution tree: $T_u$, failed action: $a_f$}}{
    $t_r, T_r' \leftarrow$ nondeterministically choose from set of tuples $\{ t_a, T_a$ | $t_a$ is an ancestor of $a_f$, decomposition tree $T_a$ is rooted at $t_a$ and applicable in $ \gamma ( s_c, T_u[ \prec  a_f] \}$ \;
    $T_u' \leftarrow \replace ( T_u, t_r, T_r' )$\;
    \uIf{ $T_r'$ is \None }{
        \return False \;      
    }
    \uElseIf{ ${\color{red}T_u'}$ is applicable in $s_c$ \label{alg:nondeterministic_shopfixer:T_u}}{
        \return $T_u'$ \;   
    }   
    \uElse{
        $a_f' \leftarrow$ first action, $a \in T_u'$, where ${\gamma ( s_c, T_u'[ \prec  a] ) \not\models \color{red}\prestar(a)}$ \; \label{alg:nondeterministic_shopfixer:pre}
        \return \sfshort{$s_c, T_u', a_f'$} \;
    }
}
}\end{algorithm2e}

-------------------------------------------------------------------------------
-------------------------------------------------------------------------------
-------------------------------------------------------------------------------
\subsection{Travel Example}
\label{sec:travel-example}

\begin{figure}[t]
  \centering
  \includegraphics[width=0.55\columnwidth]{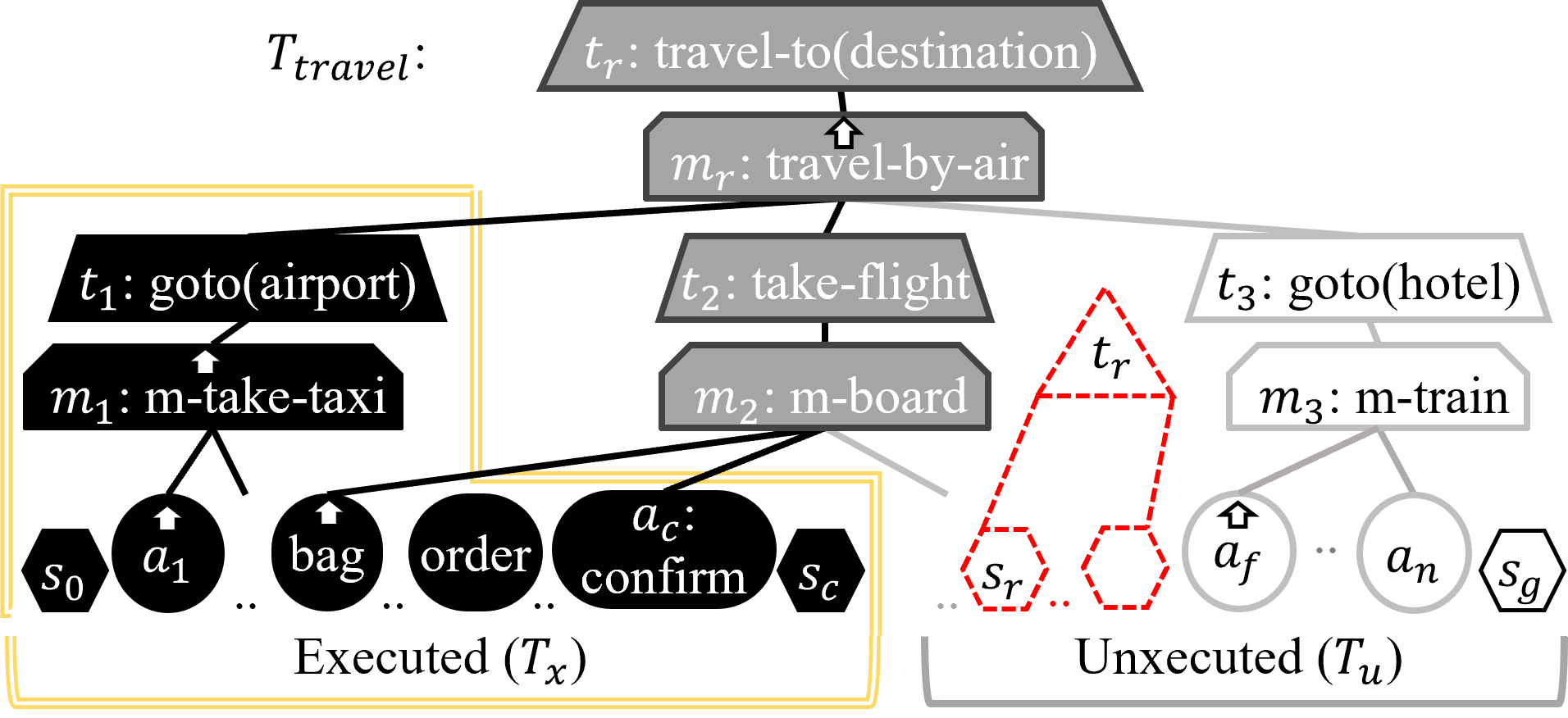}
  \caption{A notional d-tree $T_{travel}$ showing tasks ($t$) as trapezoids, methods ($m$) as boxes, actions ($a$) as circles, and states ($s$) as hexagons; $s_g$ is the goal.  
  The set of executed nodes of the d-tree, denoted $T_x$ (black) are within the amber double line. 
  Partial (gray) and un-executed (white) portions are denoted $T_u$.  
  A failure at the current action $a_c$ results in anomalous state $s_c$, and a repair (red) is introduced at $t_r$.
  A potential failure is expected at action $a_f$.
  }
  \label{fig:travel-example}
\end{figure}

Figure~\ref{fig:travel-example} shows the d-tree $T_{travel}$ for getting a traveler to a destination.  
For indices $i,j,k$, tasks are trapezoids labeled $t_i$, methods are boxes labeled $m_j$, actions are circles labeled $a_k$, and states are hexagons labeled $s_k$.  
Execution for a plan of length $n$ starts in the initial state $s_0$, proceeds through pairs $(a_k, s_k)$  for $1 \leq k \leq n$ until  $s_n \models s_g$.
$T_x$ includes the black nodes of $T_{travel}$, while $T_u$ includes the partially (gray) or un-executed (white) portions. 
Calculating preconditions in $T$ is fully explained in Section~\ref{sec:theory-preliminaries}.  Briefly, for a method or action: 
\textbf{(white arrows)} if it is the first node in its subtree, it intersects its preconditions with the precondition(s) of its parent method(s), possibly up to the root.  
\textbf{(no arrows)} if it is \emph{not} the first, it uses its own precondition.

Figure~\ref{fig:travel-example} shows a solution where method $m_r$ decomposes air travel ($t_r$) into getting to the airport ($t_1$), taking the flight ($t_2$), and getting to the hotel ($t_3$).
$T_{travel}$ uses a taxi for local travel at the originating city ($m_2$) and a train at the destination city ($m_3$).

\paragraph{Making repair necessary.}
Suppose that the traveler has arrived at the airport and checked a bag, denoted by $T_x$.
Deciding to eat before the flight, the traveler is in a restaurant, has ordered food (but not yet eaten or paid) and has just discovered their flight is cancelled during the action $a_c: \texttt{confirm}$. 
The  unexpected current state $s_c$ requires repair for $t_2$.
An ideal solution will be stable in the sense of Fox \etal
Thus, the best repair will maintain commitments, which include: finishing and paying for the meal, leaving the bag checked with the current airline, flying to the same airport, using the train, and lodging in the same hotel. 
We assume that there are suitable methods to book a new flight, recheck a bag, etc.

Repair techniques differ in considering anticipated failures.  
To finish this example, let us also assume that the only flights available at $s_c$ will cause the traveler to miss the last train at the destination; i.e., the precondition of $a_f$ will fail, requiring a repair to $t_3$.

}
\end{document}
\typeout{get arXiv to do 4 passes: Label(s) may have changed. Rerun}